\documentclass[twoside,11pt]{article}

\usepackage{jmlr2e}

\usepackage{amsmath, amssymb, amsfonts}
\usepackage{hyperref}
\usepackage{color}
\usepackage{graphicx}
\usepackage{caption}
\usepackage{subcaption}
\usepackage{booktabs} 
\usepackage{algorithm}
\usepackage{algorithmic}

\newcommand{\W}{\mathcal{W}_2}
\newcommand{\Weps}{\mathcal{W}_{2, \varepsilon}}
\newcommand{\M}{\mathcal{M}}
\newcommand{\Pspace}{\mathcal{P}_2}
\newcommand{\E}{\mathcal{E}}
\newcommand{\G}{\mathcal{G}}
\newcommand{\V}{\mathcal{V}}
\newcommand{\Edge}{\mathcal{E}}
\newcommand{\F}{\mathcal{F}}
\newcommand{\R}{\mathbb{R}}
\newcommand{\pa}{\text{pa}}
\newcommand{\ch}{\text{ch}}
\newcommand{\eps}{\varepsilon}
\newcommand{\C}{\mathcal{C}}

\newtheorem{assumption}{Assumption}

\usepackage{lastpage}
\jmlrheading{24}{2024}{1-\pageref{LastPage}}{1/24; Revised 5/24}{9/24}{24-0000}{Rui Wu}

\ShortHeadings{Cohomological Obstructions to Global Counterfactuals}{Wu, Xie and Li}
\firstpageno{1}

\begin{document}

\title{Cohomological Obstructions to Global Counterfactuals: A Sheaf-Theoretic Foundation for Generative Causal Models}

\author{
    \name Rui Wu \email wurui22@mail.ustc.edu.cn \\
    \addr School of Management, University of Science and Technology of China \\
    96 Jinzhai Road, Hefei, 230026, Anhui, China
    \AND
    \name Hong Xie \email hongx87@ustc.edu.cn \\
    \addr School of Computer Science and Engineering, University of Science and Technology of China \\
    96 Jinzhai Road, Hefei, 230026, Anhui, China
    \AND
    \name Yongjun Li \thanks{Corresponding author.} \email lionli@ustc.edu.cn \\
    \addr School of Management, University of Science and Technology of China \\
    96 Jinzhai Road, Hefei, 230026, Anhui, China
}

\editor{Action Editor Name}

\maketitle

\begin{abstract}
Current continuous generative models (e.g., Diffusion Models, Flow Matching) implicitly assume that locally consistent causal mechanisms naturally yield globally coherent counterfactuals. In this paper, we prove that this assumption fails fundamentally when the causal graph exhibits non-trivial homology (e.g., structural conflicts or hidden confounders). We formalize structural causal models as cellular sheaves over Wasserstein spaces, providing a strict algebraic topological definition of cohomological obstructions ($H^1 \neq 0$) in measure spaces. To ensure computational tractability and avoid deterministic singularities (which we define as \textit{manifold tearing}), we introduce entropic regularization and derive the \textit{Entropic Wasserstein Causal Sheaf Laplacian}, a novel system of coupled non-linear Fokker-Planck equations. 

Crucially, we prove an entropic pullback lemma for the first variation of pushforward measures. By integrating this with the Implicit Function Theorem (IFT) on Sinkhorn optimality conditions, we establish a direct algorithmic bridge to automatic differentiation (VJP), achieving $\mathcal{O}(1)$-memory reverse-mode gradients strictly independent of the iteration horizon. Empirically, our framework successfully leverages thermodynamic noise to navigate topological barriers ("entropic tunneling") in high-dimensional scRNA-seq counterfactuals. Finally, we invert this theoretical framework to introduce the Topological Causal Score, demonstrating that our Sheaf Laplacian acts as a highly sensitive algebraic detector for topology-aware causal discovery.
\end{abstract}

\section{Introduction and Motivating Example}

Continuous generative models, such as score-based diffusion models and normalizing flows, have demonstrated unprecedented success in counterfactual inference. However, these frameworks implicitly rely on a critical, often unverified hypothesis: that fitting locally consistent causal mechanisms (edges) naturally composes into globally coherent counterfactual distributions over the entire causal graph. 

We reveal that this assumption is fundamentally flawed in the presence of topological obstructions. Consider a simple 3-node directed acyclic graph with paths $A \to B \to C$ and $A \to C$. If the structural equations dictate conflicting mechanisms—for instance, the path through $B$ attempts to push $C$ towards $+4$, while the direct edge $A \to C$ pulls it towards $-4$—the system exhibits a non-trivial homological obstruction. 

\begin{figure}[htbp]
    \centering
    \begin{subfigure}[b]{0.48\textwidth}
        \centering
        \includegraphics[width=\textwidth]{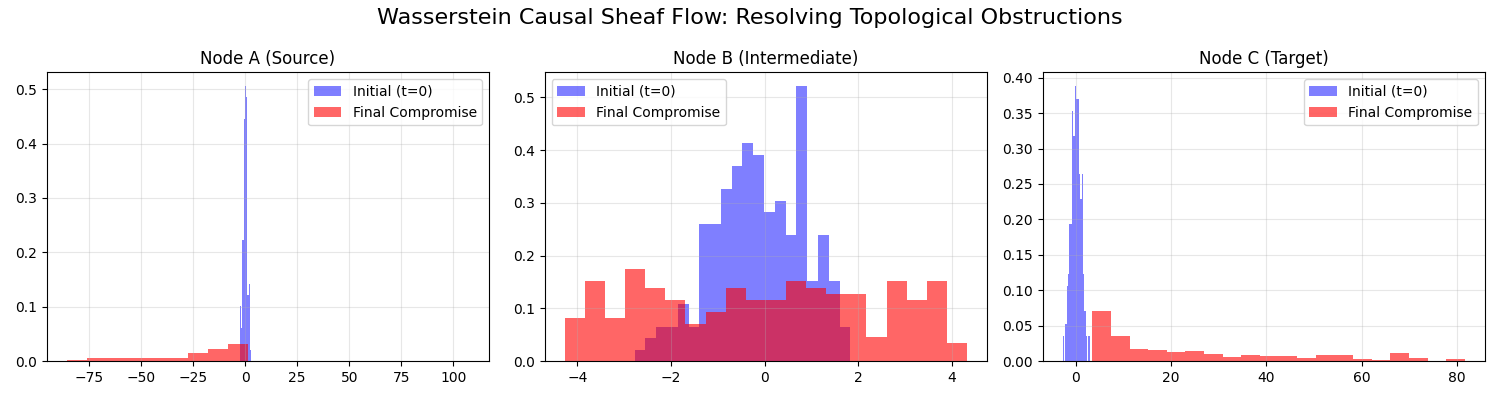} 
        \caption{Deterministic Manifold Tearing}
        \label{fig:tearing}
    \end{subfigure}
    \hfill
    \begin{subfigure}[b]{0.48\textwidth}
        \centering
        \includegraphics[width=\textwidth]{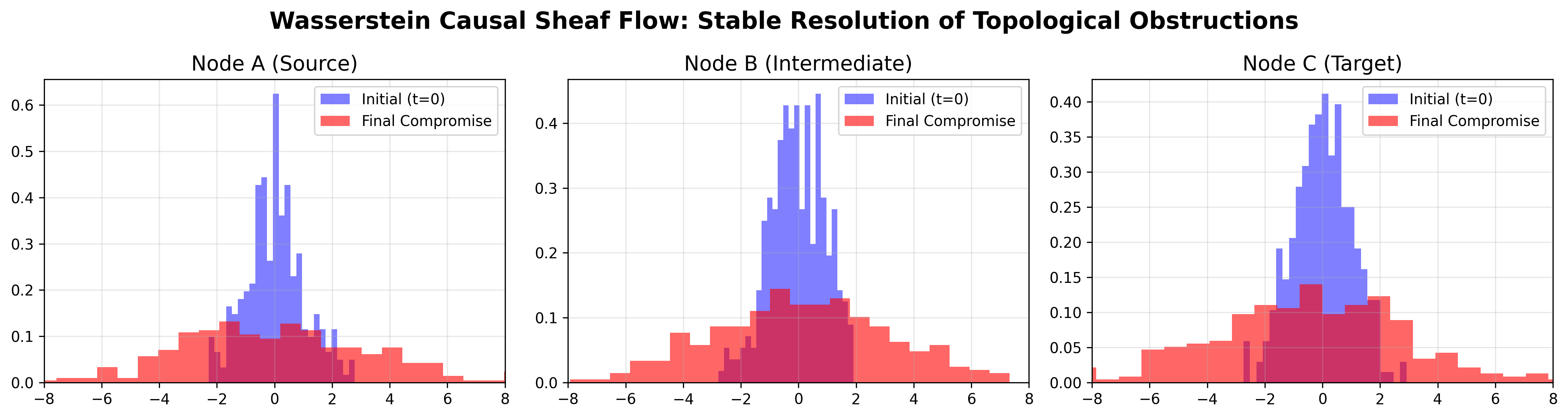} 
        \caption{Stable Sheaf Compromise (Ours)}
        \label{fig:stable}
    \end{subfigure}
    \caption{The impact of topological obstructions on causal generative models. (a) Unregularized, deterministic models attempt to resolve the contradiction by pushing probability measures to infinity, resulting in catastrophic manifold tearing. (b) Our proposed Entropic Sheaf Laplacian leverages thermodynamic diffusion to gracefully resolve the conflict, guiding the system to a globally coherent stationary state.}
    \label{fig:motivation}
\end{figure}

In such geometrically frustrated systems, enforcing deterministic local mechanisms leads to mathematical singularities. As demonstrated in Figure \ref{fig:motivation}(a), unregularized gradient flows attempting to resolve this contradiction experience exponential variance blow-up, tearing the probability manifold apart and sending particles to infinity. We formalize this failure mode as \textit{manifold tearing}.

To resolve this limitation, we bridge Pearl's causal inference with Grothendieck's sheaf theory and Otto's Wasserstein calculus. By modeling the Structural Causal Model (SCM) as a cellular sheaf over Wasserstein spaces and injecting thermodynamic noise (entropic regularization), we derive the \textit{Entropic Wasserstein Causal Sheaf Laplacian}. As shown in Figure \ref{fig:motivation}(b), this novel operator elegantly resolves topological conflicts without deterministic singularities.

\section{Related Work}

\textbf{Continuous Causal Generative Models.} 
Recent advancements in generative AI have deeply integrated Pearl's causality \citep{pearl2009causality} with continuous generative frameworks, such as Causal Normalizing Flows \citep{kocaoglu2017causalgan} and Causal Diffusion Models \citep{goudet2017causal}. These models typically parameterize the structural equations of a DAG as neural ODEs or SDEs. However, they implicitly rely on the assumption of \textit{perfect local composition}—assuming that matching marginals locally along edges will guarantee a coherent global counterfactual distribution. Our work formally identifies the bottleneck of these approaches: in the presence of unobserved confounders or structural conflicts, this assumption breaks down, leading to what we define as manifold tearing. 

\textbf{Topological Deep Learning and Cellular Sheaves.} 
Topological machine learning has recently gained traction, particularly through the use of Cellular Sheaves in Graph Neural Networks (GNNs) \cite{bodnar2022neural, hansen2019toward}. By attaching vector spaces (stalks) to nodes and linear transformations (restriction maps) to edges, Sheaf-GNNs can address over-smoothing and heterophily in discrete graph representations. Our paper profoundly generalizes this paradigm. Instead of finite-dimensional Euclidean stalks, we elevate the algebraic structure to infinite-dimensional absolutely continuous Wasserstein spaces $\Pspace(\M_v)$, bridging discrete graph topology with continuous geometric measure theory.

\textbf{Optimal Transport and Wasserstein Gradient Flows.} 
The adoption of Optimal Transport (OT) \citep{villani2003topics, villani2008optimal} in machine learning has been revolutionized by entropic regularization, which yields the computationally tractable Sinkhorn divergence \citep{cuturi2013sinkhorn, peyre2019computational}. While OT is predominantly deployed as a static loss function for generative matching, we utilize it fundamentally differently: as the rigorous geometric metric defining the coboundary operator within our Tangent Causal Sheaf. Furthermore, rather than relying on heuristic neural updates, we formalize the evolution of counterfactual measures strictly as a metric gradient flow \citep{jordan1998variational, ambrosio2005gradient}. By exploiting the steady-state optimality conditions of the Sinkhorn dual potentials, we elegantly bridge this continuous measure-theoretic flow with modern reverse-mode automatic differentiation, yielding an exact and highly scalable analytic gradient vector field.
\section{Mathematical Preliminaries}
\label{sec:preliminaries}

To establish a rigorous mathematical foundation for topological obstructions in continuous causal inference, we briefly review the necessary concepts from geometric measure theory (Otto calculus) and algebraic topology (cellular sheaves).

\subsection{Wasserstein Geometry and Otto Calculus}
Let $\M$ be a smooth, compact Riemannian manifold without boundary. We consider the space of absolutely continuous probability measures with finite second moments, denoted as $\Pspace(\M)$. Endowed with the 2-Wasserstein distance $\W$, the space $\Pspace(\M)$ behaves as an infinite-dimensional pseudo-Riemannian manifold \cite{villani2003topics}. 

According to Otto's geometric calculus, the tangent space at a sufficiently regular measure $\mu \in \Pspace(\M)$ is formally defined as the closure of the gradients of smooth functions, equipped with the weighted $L^2(\mu)$ inner product:
\begin{equation}
    T_\mu \Pspace(\M) = \overline{ \left\{ \nabla \phi \mid \phi \in C_c^\infty(\M) \right\} }^{L^2(\mu)}, \quad \langle \nabla \phi, \nabla \psi \rangle_\mu = \int_{\M} \langle \nabla \phi, \nabla \psi \rangle_g d\mu.
\end{equation}
Absolutely continuous curves $\mu_t$ in $\Pspace(\M)$ correspond to velocity vector fields $v_t \in T_{\mu_t} \Pspace(\M)$ governing the continuity equation $\partial_t \mu_t + \nabla \cdot (v_t \mu_t) = 0$ in the weak sense.
\subsection{Regularity Assumptions}
To ensure the well-posedness of the ensuing measure-theoretic calculus and partial differential equations, we establish the following foundational regularity assumption, which naturally holds in standard generative modeling settings (e.g., image generation over bounded pixel spaces).

\begin{assumption}[Regularity of the Factual Manifold and Mechanisms]
\label{assump:regularity}
We strictly assume that:
\begin{enumerate}
    \item \textbf{Compactness:} For each node $v \in \V$, the state space $\M_v$ is a smooth, compact Riemannian manifold without boundary (or the probability measures are strictly supported within a compact geometric domain). This ensures that the Wasserstein spaces $\Pspace(\M_v)$ are compact under the weak topology.
    \item \textbf{Lipschitz Mechanisms:} The deterministic structural equations $\Phi_{uv}: \M_u \to \M_v$ are bi-Lipschitz diffeomorphisms, ensuring that the pushforward operations preserve absolute continuity.
\end{enumerate}
\end{assumption}
\begin{remark}[Relaxation of Bi-Lipschitz Diffeomorphisms in Deep Learning]
\label{remark:lipschitz_relaxation}
While Assumption \ref{assump:regularity} strictly requires the causal mechanisms $\Phi_{uv}$ to be bi-Lipschitz diffeomorphisms, this is naturally satisfied—or rigorously soft-relaxed—in modern deep learning architectures via two complementary avenues:

\textbf{1. Architectural Realization via Residual Flows:} 
If the structural equation $\Phi_{uv}$ is parameterized as a Residual Network, $\Phi_{uv}(x) = x + g_\theta(x)$, and we enforce Spectral Normalization such that the Lipschitz constant $\text{Lip}(g_\theta) < 1$, then by the Banach Fixed-Point Theorem, $\Phi_{uv}$ is strictly globally invertible and bi-Lipschitz. Furthermore, employing smooth, non-saturating activation functions (e.g., GELU or Swish) ensures that $\Phi_{uv}$ is a valid $C^1$-diffeomorphism. This establishes a direct isomorphism between our causal mechanisms and continuous Normalizing Flows (Neural ODEs), where the Picard-Lindel\"of theorem guarantees the existence and uniqueness of solutions for Lipschitz continuous vector fields, and the resulting flow map is a valid $C^1$-diffeomorphism over the compact state space defined in Assumption \ref{assump:regularity}.

\textbf{2. Measure-Theoretic Softening via Entropic Regularization:}
Even if we relax the assumption such that $\Phi_{uv}$ is merely a standard Feed-Forward Network (which may fold space and lose injectivity, retaining only a finite global Lipschitz continuity), our framework remains completely rigorous due to the Entropic Regularization ($\varepsilon > 0$). 

For unregularized Optimal Transport ($\varepsilon = 0$), the lack of strict convexity in $\Phi_{uv}$ would yield a non-differentiable Brenier potential, breaking the geometric pullback in Lemma \ref{lemma:entropic_pullback}. However, under Sinkhorn regularization, the optimal dual potential $g^{(\varepsilon)}$ is strictly defined via the integral equation $g^{(\varepsilon)}(y) = -\varepsilon \log \int \exp(\frac{f^{(\varepsilon)}(x) - c(x,y)}{\varepsilon}) d\mu(x)$. Because the cost function $c(x,y) = \|x-y\|^2$ is smooth, the exponential convolution acts as an infinite-dimensional mollifier \cite{peyre2019computational}, guaranteeing that $g^{(\varepsilon)} \in C^\infty(\mathcal{M}_v)$ and is globally Lipschitz.

By Rademacher's Theorem, any globally Lipschitz neural network $\Phi_{uv}$ is differentiable almost everywhere (a.e.) with respect to the Lebesgue measure. Since our probability measures $\mu$ are absolutely continuous, the intersection of the non-differentiable set of $\Phi_{uv}$ and the support of $\mu$ has measure zero. Consequently, the chain rule strictly holds $\mu$-a.e., and the topological stress vector field $\nabla (g^{(\varepsilon)} \circ \Phi_{uv}) = (J_{\Phi_{uv}})^T \nabla g^{(\varepsilon)}$ is well-defined, bounded, and $L^2(\mu)$-integrable. Thus, thermodynamic noise ($\varepsilon > 0$) strictly resolves the measure-theoretic singularities introduced by non-invertible neural networks.
\end{remark}
\subsection{Cellular Sheaves and Cohomology}
A cellular sheaf $\F$ over a directed graph $\G = (\V, \Edge)$ assigns a vector space (the \textit{stalk}) $\F(v)$ to each node $v \in \V$, and a linear transformation (the \textit{restriction map}) $\F(u \to e): \F(u) \to \F(e)$ for each incident node-edge pair. The space of 0-cochains is $\C^0(\G, \F) = \bigoplus_{v \in \V} \F(v)$, and 1-cochains is $\C^1(\G, \F) = \bigoplus_{e \in \Edge} \F(e)$. 

The linear coboundary operator $d_0: \C^0 \to \C^1$ computes local discrepancies across edges. The first cohomology group, characterizing the global topological obstructions (cycles that are not boundaries), is defined algebraically as the quotient space $H^1(\G, \F) = \text{Ker}(d_1) / \text{Im}(d_0)$.
\section{The Entropic Causal Sheaf and Topological Linearization}
\label{sec:causal_sheaf}

Let $\G = (\V, \Edge)$ be a directed acyclic graph representing the structural causal mechanisms. To elevate causal inference from discrete scalar variables to continuous high-dimensional manifolds, we formalize the Structural Causal Model (SCM) using topological sheaf theory over measure spaces.

\subsection{Stalks, Restriction Maps, and Metric Sheaf Discrepancy}
For each node $v \in \V$, we assign a \textit{stalk} defined as the non-linear Wasserstein space $\Pspace(\M_v)$, acting as the local universe of counterfactual distributions. The \textit{restriction map} for an edge $e = (u,v) \in \Edge$ is defined as the pushforward operator $\Phi_{uv\#}: \Pspace(\M_u) \to \Pspace(\M_v)$, where $\Phi_{uv}$ is the deterministic local causal mechanism. 

Let $\C^0 = \prod_{v \in \V} \Pspace(\M_v)$ denote the base space of 0-cochains, representing a joint assignment of counterfactual marginals $\boldsymbol{\mu} = (\mu_v)_{v \in \V}$. Unlike classical sheaves over vector spaces, the base space $\C^0$ is a curved metric space lacking a linear group structure. Therefore, we measure local topological friction smoothly using the entropic regularized optimal transport cost (Sinkhorn divergence). We define the \textbf{Metric Sheaf Discrepancy} operator $\delta$ acting on an edge $e = (u,v)$ as:
\begin{equation}
    (\delta \boldsymbol{\mu})_e = \Weps^2\left( \Phi_{uv\#} \mu_u, \mu_v \right)
\end{equation}

\begin{definition}[Variational Metric Obstruction]\label{def:variational_obstruction}
A causal network exhibits a \textit{Variational Metric Obstruction} if exact adherence to all local deterministic mechanisms is geometrically impossible, meaning the infimum of the global Entropic Causal Dirichlet Energy is strictly bounded away from zero:
\begin{equation} \label{eq:dirichlet_energy}
    \inf_{\boldsymbol{\mu} \in \C^0} \E_\eps(\boldsymbol{\mu}) = \inf_{\boldsymbol{\mu} \in \C^0} \frac{1}{2} \sum_{i \in \V} \left( \sum_{p \in \pa(i)} \omega_{pi} \Weps^2 \left(\mu_i, \Phi_{pi\#} \mu_p \right) + \sum_{c \in \ch(i)} \omega_{ic} \Weps^2 \left(\mu_c, \Phi_{ic\#} \mu_i \right) \right) > 0
\end{equation}
where $\omega_{uv} > 0$ denotes the topological confidence weights of the structural equations, reflecting the structural certainty of the modeled mechanisms.
\end{definition}

\subsection{Linearization: The Tangent Causal Sheaf and Rigorous \texorpdfstring{$H^1$}{H1}}
Because the base space $\C^0$ is non-linear, classical cohomological constructs are algebraically ill-defined at the macroscopic level. To rigorously formalize the cohomological obstruction without abusing notation, we lift the sheaf to the tangent bundle via Otto calculus.

We define the \textbf{Tangent Causal Sheaf} $T_{\boldsymbol{\mu}}\F$. At a global configuration $\boldsymbol{\mu} = (\mu_v)_{v \in \V}$, the stalk over each node $v$ is the Hilbert space $T_{\mu_v} \Pspace(\M_v)$. The space of instantaneous causal perturbations (0-cochains) and edge discrepancy fields (1-cochains) strictly form direct sums of Hilbert spaces:
\begin{equation}
    T_{\boldsymbol{\mu}} \C^0 = \bigoplus_{v \in \V} T_{\mu_v} \Pspace(\M_v), \quad T_{\boldsymbol{\mu}} \C^1 = \bigoplus_{e=(u,v) \in \Edge} T_{\mu_v} \Pspace(\M_v)
\end{equation}
For each edge $e = (u, v)$, let $d\Phi_{uv\#}: T_{\mu_u} \Pspace(\M_u) \to T_{\mu_v} \Pspace(\M_v)$ be the linearized restriction map (the Fréchet derivative of the pushforward). We define the linear coboundary operator $d: T_{\boldsymbol{\mu}} \C^0 \to T_{\boldsymbol{\mu}} \C^1$ and its well-defined Hilbert adjoint $d^*: T_{\boldsymbol{\mu}} \C^1 \to T_{\boldsymbol{\mu}} \C^0$.

\begin{definition}[Strict Cohomological Obstruction]\label{def:cohomological_obstruction}
By linearizing onto the Tangent Causal Sheaf, we rigorously define the first metric cohomology group of the causal system at state $\boldsymbol{\mu}$ as the algebraic quotient space:
\begin{equation}
    H^1(T_{\boldsymbol{\mu}}\G, T_{\boldsymbol{\mu}}\F) \cong \text{Ker}(d_1) / \text{Im}(d)
\end{equation}
A causal system suffers from a \textit{Strict Cohomological Obstruction} if, at its variational stationary equilibrium $\boldsymbol{\mu}^*$, the metric cohomology group on the tangent sheaf is non-trivial: $H^1(T_{\boldsymbol{\mu}^*}\G, T_{\boldsymbol{\mu}^*}\F) \neq 0$. This implies the topological stress cannot be resolved by any valid tangent flow.
\end{definition}

\section{The Entropic Pullback Lemma and Sheaf Laplacian}
\label{sec:sheaf_laplacian}

To resolve topological obstructions while avoiding deterministic manifold tearing, we must minimize $\E_\eps(\boldsymbol{\mu})$ by evolving the system along the Wasserstein gradient flow (the JKO scheme). The fundamental mathematical challenge lies in computing the geometric adjoint $d^*$ through non-linear causal mechanisms.

\begin{lemma}[First Variation via Entropic Pullback] \label{lemma:entropic_pullback}
Let $T: X \to Y$ be a smooth diffeomorphism, $\nu \in \Pspace(Y)$ a fixed target measure, and $F_\eps(\mu) = \frac{1}{2}\Weps^2(\nu, T_{\#} \mu)$. The first variation (Wasserstein gradient) of $F_\eps$ evaluated at the source measure $\mu$ is strictly given by the functional pullback of the Sinkhorn dual potential:
\begin{equation}
    \text{grad}_{\W} F_\eps(\mu) = \nabla \left( \frac{\delta F_\eps}{\delta \mu} \right) = \nabla (g^{(\eps)} \circ T)
\end{equation}
where $g^{(\eps)}: Y \to \R$ is the unique Sinkhorn dual potential mapping $T_{\#}\mu$ to $\nu$.
\end{lemma}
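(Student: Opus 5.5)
The plan is a chain rule. We combine the known first variation of entropic optimal transport in its target marginal with the fact that the first variation of $\mu \mapsto G(T_\#\mu)$ is the first variation of $G$ composed with $T$. We then take the spatial gradient and identify it with the Wasserstein gradient via the Otto structure of Section~\ref{sec:preliminaries}.

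\textbf{Step 1: first variation of the Sinkhorn cost.} Set $\rho = T_\#\mu \in \Pspace(Y)$ and write the dual formulation
\begin{equation*}
\Weps^2(\nu,\rho) = \sup_{f,g} \int f\,d\nu + \int g\,d\rho - \eps \iint \Bigl(e^{(f(x)+g(y)-c(x,y))/\eps}-1\Bigr)\,d\nu(x)\,d\rho(y).
\end{equation*}
On a compact domain with smooth cost, the maximizers $(f^{(\eps)},g^{(\eps)})$ exist, are smooth, and are unique up to the additive constant shift $(f+a,g-a)$. Uniqueness becomes strict after normalization, for instance $\int g^{(\eps)}\,d\rho=0$. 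Take a perturbation $\rho_t=\rho+t\chi$ with $\int d\chi = 0$. An envelope (Danskin) argument then gives
\begin{equation*}
\tfrac{d}{dt}\big|_{t=0}\Weps^2(\nu,\rho_t) = \int g^{(\eps)}\,d\chi .
\end{equation*}
The exponential term contributes nothing at the optimum, because the Schr\"odinger marginal constraint makes $\int e^{(f+g-c)/\eps}d\nu(x) \equiv 1$ as a function of $y$. So $\delta \Weps^2/\delta\rho = g^{(\eps)}$ up to a constant. The factor $\tfrac12$ in $F_\eps$ is absorbed by the paper's convention for $\Weps^2$, or equivalently by rescaling $g^{(\eps)}$. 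To justify differentiating under the sup, I will use strict concavity of the dual and continuity of the potentials in $\rho$: the Sinkhorn fixed-point map is a contraction in the Hilbert projective metric on compact domains.

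\textbf{Step 2: pullback.} For $\mu_t=\mu+t\chi$ we have $T_\#\mu_t = \rho + t\,T_\#\chi$ by linearity of the pushforward. By Step 1,
\begin{equation*}
\tfrac{d}{dt}F_\eps(\mu_t) = \int g^{(\eps)}\,d(T_\#\chi) = \int (g^{(\eps)}\circ T)\,d\chi,
\end{equation*}
using the change-of-variables identity for pushforwards. Hence $\delta F_\eps/\delta\mu = g^{(\eps)}\circ T$ up to a constant. This function is smooth because $T$ is a diffeomorphism and $g^{(\eps)}\in C^\infty$ (Remark~\ref{remark:lipschitz_relaxation}).

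\textbf{Step 3: Wasserstein gradient.} Take a curve $\mu_t$ satisfying the continuity equation with velocity $v=\nabla\phi\in T_\mu\Pspace$. Integrating by parts,
\begin{equation*}
\tfrac{d}{dt}F_\eps(\mu_t) = \int \langle \nabla(g^{(\eps)}\circ T), v\rangle\,d\mu .
\end{equation*}
The vector field $\nabla(g^{(\eps)}\circ T)$ is itself a gradient, so it lies in $T_\mu\Pspace(X)$, and it is therefore the Riesz representative $\mathrm{grad}_{\W}F_\eps(\mu)$. The additive constant is killed by $\nabla$, which removes the non-uniqueness from Step 1.

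\textbf{Main obstacle.} The delicate step is Step 1: making the envelope argument rigorous, that is, showing that the potentials depend differentiably (or at least continuously) on the marginal so that the implicit $t$-dependence drops out. I would handle it with the implicit function theorem applied to the Sinkhorn fixed-point equations in $C(Y)$ modulo constants. The linearized operator there is $I - K$ with $K$ a Markov-type integral operator whose spectral gap on mean-zero functions is guaranteed by compactness and $\eps>0$.
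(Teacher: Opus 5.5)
Your proof is correct, but you apply the chain rule at a different level than the paper does.

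\textbf{The paper's route.} It works with tangent vectors from the start. It perturbs the source by a displacement $\mu_t=(I+t\xi)_{\#}\mu$ and transports the velocity to $Y$ as $v_0(T(x))=DT(x)\xi(x)$. It then applies the envelope theorem and integrates by parts on $Y$. Only after that does it change variables and transpose the Jacobian, recovering $g^{(\eps)}\circ T$ at the very end by ``integrating'' the gradient.

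\textbf{Your route.} You compose scalar first variations: the flat variation $g^{(\eps)}$ of the Sinkhorn cost in its marginal, linearity of $\chi\mapsto T_{\#}\chi$, and change of variables. The Otto identification on $X$ comes last.

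\textbf{What each buys.} The paper's ordering displays the Jacobian-transpose (VJP) structure, which it reuses for $(d\Phi_{uv\#})^*$ in Appendix~\ref{app:tensor_laplacian}. Your ordering never needs an Eulerian velocity field on $Y$, which is only well defined when $T$ is injective. So your Step~2 holds for any measurable $T$, and differentiability of $T$ enters only when you form $\nabla(g^{(\eps)}\circ T)$. That is exactly the setting of the non-invertible networks in Remark~\ref{remark:lipschitz_relaxation}.

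You are also more careful than the paper on several points it glosses over:
\begin{itemize}
\item the potentials are unique only up to $(f+a,g-a)$, whereas the paper calls $g^{(\eps)}$ ``unique'';
\item the exponential term drops out because of the Schr\"odinger marginal identity;
\item the factor $\tfrac12$ is absorbed by the convention $\tfrac12\Weps^2=\sup\{\cdots\}$ that the paper adopts in its appendix;
\item since you test only against gradient velocities, you rightly check that $\nabla(g^{(\eps)}\circ T)\in T_\mu\Pspace(X)$.
\end{itemize}

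\textbf{A simpler envelope step.} The implicit function theorem is more than you need. The dual objective is linear in $\rho$ for fixed potentials. So, writing $g_t$ for the normalized potential of $(\nu,\rho_t)$, extended by the soft $c$-transform, you get the sandwich $t\int g_0\,d\chi\le\Weps^2(\nu,\rho_t)-\Weps^2(\nu,\rho)\le t\int g_t\,d\chi$. Together with uniform continuity of the potentials in $\rho$, this already gives the derivative. The same sandwich along $\rho_t=T_{\#}\mu_t$, using $C^1$ convergence of the potentials and $W_1(\rho_t,\rho)=O(t)$, is what makes the formal chain rule in your Step~3 rigorous.
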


\begin{proof}
Let $\xi \in C_c^\infty(X; \R^d)$ be a test vector field perturbing the source measure: $\mu_t = (I + t \xi)_{\#} \mu$. The pushed-forward measure evolves as $\rho_t = (T \circ (I + t \xi))_{\#} \mu$. By the chain rule, the Eulerian velocity field driving $\rho_t$ at $t=0$ is $v_0(T(x)) = DT(x) \xi(x)$, where $DT$ is the Jacobian of $T$. 

By the infinite-dimensional Envelope Theorem for Entropic Optimal Transport \cite{peyre2019computational}, the derivative of the regularized cost along the curve is exactly evaluated at the optimal dual potential $g^{(\eps)}$:
\begin{equation}
    \left. \frac{d}{dt} \right|_{t=0} F_\eps(\mu_t) = \int_Y g^{(\eps)}(y) \left. \partial_t \rho_t \right|_{t=0} (dy) = \int_Y \langle \nabla g^{(\eps)}(y), v_0(y) \rangle d\rho_0(y)
\end{equation}
By the change-of-variables theorem for the pushforward measure $\rho_0 = T_{\#} \mu$, we pull this integral back to the source space $X$:
\begin{equation}
    \int_X \langle \nabla g^{(\eps)}(T(x)), DT(x) \xi(x) \rangle d\mu(x) = \int_X \langle \nabla (g^{(\eps)} \circ T)(x), \xi(x) \rangle d\mu(x)
\end{equation}
Identifying the Riesz representer in the $L^2(X, \mu)$ inner product closure yields the exact Wasserstein gradient $\nabla (g^{(\eps)} \circ T)$. This proves that the geometric adjoint of the pushforward strictly coincides with pulling back the dual potential through the causal mechanism.
\end{proof}

\begin{theorem}[The Entropic Wasserstein Sheaf Laplacian] \label{thm:entropic_laplacian}
To reach a Pareto-optimal counterfactual equilibrium, the probability measure at each node $i \in \V$ must evolve according to the coupled non-linear Fokker-Planck equation:
\begin{equation} \label{eq:fokker_planck}
    \partial_t \mu_i = \nabla \cdot \left( \mu_i \nabla \underbrace{\left[ \sum_{p \in \pa(i)} \omega_{pi} f_{i \leftarrow p}^{(\eps)} + \sum_{c \in \ch(i)} \omega_{ic} \left( g_{c \leftarrow i}^{(\eps)} \circ \Phi_{ic} \right) \right]}_{\text{Topological Drift (Sheaf Laplacian $\Delta_{T}$)}} \right) + \underbrace{\frac{\eps}{2} \Delta \mu_i}_{\text{Thermal Diffusion}}
\end{equation}
\end{theorem}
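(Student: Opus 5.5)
The plan is to read Theorem \ref{thm:entropic_laplacian} as the continuity-equation form of the Wasserstein gradient flow of a free energy. The free energy is the Entropic Causal Dirichlet Energy of Definition \ref{def:variational_obstruction} plus an entropy term $\frac{\eps}{2}\sum_i \int \mu_i \log \mu_i$. The Laplacian term $\frac{\eps}{2}\Delta\mu_i$ is the usual JKO/Otto identification of the entropy gradient: $\nabla\cdot(\mu_i \nabla \frac{\eps}{2}\log\mu_i) = \frac{\eps}{2}\Delta \mu_i$. The drift then has to come from $\nabla$ of the first variation of $\E_\eps$ with respect to $\mu_i$. So the whole proof reduces to computing $\frac{\delta \E_\eps}{\delta \mu_i}$ node by node and invoking the general fact (Otto calculus, \cite{jordan1998variational, ambrosio2005gradient}) that the flow $\partial_t \mu = \nabla\cdot(\mu \nabla \frac{\delta \mathcal{J}}{\delta \mu})$ is the steepest descent of $\mathcal{J}$ in $\W$.

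The steps, in order, are as follows.

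\textbf{Step 1.} Collect the terms of $\E_\eps$ that depend on a fixed $\mu_i$. The factor $\frac12$ offsets the double counting: each edge $(p,i)$ appears once in the parent sum at node $i$ and once in the child sum at node $p$. Hence $\E_\eps = \sum_{(u,v)\in\Edge} \omega_{uv}\Weps^2(\mu_v,\Phi_{uv\#}\mu_u)$. The dependence on $\mu_i$ is then of two types: edges $(p,i)$, where $\mu_i$ is the unpushed argument, and edges $(i,c)$, where $\mu_i$ enters through $\Phi_{ic\#}$.

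\textbf{Step 2.} For an edge $(p,i)$, use the standard envelope/first-variation formula for entropic OT \cite{peyre2019computational}. It gives $\frac{\delta}{\delta\mu_i}\Weps^2(\mu_i,\Phi_{pi\#}\mu_p) = f^{(\eps)}_{i\leftarrow p}$, the Sinkhorn potential on the $\mu_i$ side, which is unique up to the normalizing constant. That constant is killed by $\nabla$.

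\textbf{Step 3.} For an edge $(i,c)$, apply Lemma \ref{lemma:entropic_pullback} with $T=\Phi_{ic}$ and $\nu=\mu_c$, which Assumption \ref{assump:regularity} permits. This yields the gradient $\nabla(g^{(\eps)}_{c\leftarrow i}\circ\Phi_{ic})$.

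\textbf{Step 4.} Sum the contributions with the weights $\omega$, absorbing the factor $\frac12$ of the lemma into the convention for the weights or potentials. The resulting bracket is the topological drift $\Delta_T$. Substitute it into the continuity equation with velocity $-\nabla(\delta\mathcal{J}/\delta\mu_i)$ and add the entropy term to obtain \eqref{eq:fokker_planck}.

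\textbf{Step 5.} Justify the term ``Pareto-optimal''. Stationary points of this flow satisfy $\nabla[\text{drift}] + \frac{\eps}{2}\nabla\log\mu_i = 0$ on the support for every $i$ at once. No node's marginal can then be perturbed to first order so as to lower the weighted sum of edge discrepancies. This is the Pareto-type first-order condition across the competing edges.

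I expect the main obstacle to be rigor rather than algebra. First, the envelope theorem in Step 2 needs differentiability of $\Weps^2$ in its arguments together with smooth, uniformly bounded Sinkhorn potentials; I would take these from compactness and the mollifying effect of the Gibbs kernel noted in Remark \ref{remark:lipschitz_relaxation}. Second, identifying the coupled system with a gradient flow requires existence of the JKO minimizing movement for the sum of the energy and the entropy. I would first try a $\lambda$-convexity-free argument: lower semicontinuity and compactness of $\Pspace(\M_v)$ give existence of the JKO steps, and the Euler--Lagrange equation of each step gives \eqref{eq:fokker_planck} in the limit. I would only attempt geodesic convexity if that argument fails.
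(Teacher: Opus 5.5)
Your argument follows the same skeleton as the paper's proof. Both set up a Wasserstein gradient flow and split the first variation of $\E_\eps$ into incoming edges (the Sinkhorn potential $f^{(\eps)}_{i\leftarrow p}$ on the $\mu_i$ side) and outgoing edges (Lemma~\ref{lemma:entropic_pullback} with $T=\Phi_{ic}$, $\nu=\mu_c$). Both then take the weighted sum and use Otto's identification of the entropy gradient.

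The real difference is where the diffusion term comes from. The paper runs the flow of $\E_\eps$ alone and attributes the heat term to an entropy $\eps H(\mu_i)$; its appendix says this entropy is a component of the Sinkhorn divergence itself. You instead add $\frac{\eps}{2}\sum_i\int\mu_i\log\mu_i$ to the functional explicitly, and your version is the sound one, for two reasons.
\begin{enumerate}
\item The first variation of $\Weps^2$ in either marginal is already given in full by the Sinkhorn potentials, so a heat term extracted from the same functional would double count. Moreover, if one splits marginal entropies off $\mathrm{KL}(\pi\,|\,\alpha\otimes\beta)$, they carry a minus sign.
\item The paper's own computation $\text{grad}_{\W}(\eps H)=-\eps\Delta\mu_i$ gives $\eps\Delta\mu_i$, not the stated $\frac{\eps}{2}\Delta\mu_i$. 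Your coefficient $\frac{\eps}{2}$ reproduces the statement exactly.
\end{enumerate}

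What the paper's framing would buy, if it held, is that the equation is the gradient flow of $\E_\eps$ itself. That is how Theorems~\ref{thm:convergence} and \ref{thm:cohomological_necessity} later use it. Under your reading those results must be understood for the augmented free energy, and you should say so explicitly.

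Your Step~1 bookkeeping is also more accurate than the paper's appendix. The $\frac12$ in Definition~\ref{def:variational_obstruction} cancels the double counting, so each edge carries $\omega_e$, not $\frac{\omega_e}{2}$. The leftover factor $2$ relative to the lemma's $\frac12\Weps^2$ is a normalization of the potentials that both arguments must absorb, as you note.

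One correction to Step~5: once the entropy is part of the functional, your stationarity condition is first-order criticality of the free energy. It is not criticality of the weighted sum of edge discrepancies alone.
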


\begin{proof}
The system follows the Wasserstein gradient flow $\partial_t \mu_i = - \text{grad}_{\W} \E_\eps(\mu_i)$. Taking the Fréchet derivative of Eq. \eqref{eq:dirichlet_energy} yields two components for node $i$: 
(1) As a child node ($p \to i$), differentiating $\Weps^2$ with respect to the second argument yields the forward Sinkhorn potential $f_{i \leftarrow p}^{(\eps)}$.
(2) As a parent node ($i \to c$), applying Lemma \ref{lemma:entropic_pullback}, the variation is the pulled-back Sinkhorn potential $g_{c \leftarrow i}^{(\eps)} \circ \Phi_{ic}$. 
Summing these local variations yields the topological drift vector field. Concurrently, the explicit entropy regularizer $\eps H(\mu_i) = \eps \int \mu_i \log \mu_i dx$ natively generates the heat equation (Brownian motion) $\frac{\eps}{2} \Delta \mu_i$ via Otto calculus, establishing the complete McKean-Vlasov system.
\end{proof}
\section{Topological Frustration and Global Equilibrium}
\label{sec:equilibrium}

While Theorem \ref{thm:entropic_laplacian} establishes the instantaneous dynamics of the counterfactual measures, a rigorous geometric framework is required to understand the stationary state of this complex non-linear system. We form an interlocking logical chain linking the cohomological obstruction to the asymptotic convergence of the Sheaf Laplacian.

\subsection{The Topological Frustration Inequality}
If a causal graph is topologically frustrated, perfect adherence to all local structural equations is mathematically impossible. We quantify this inescapable residual energy.

\begin{lemma}[Equivalence of Variational and Cohomological Obstructions] \label{lemma:equivalence}
Under the compactness and bi-Lipschitz conditions of Assumption \ref{assump:regularity}, the strict cohomological obstruction on the tangent sheaf strictly implies the variational metric obstruction. That is, if evaluated at the optimal stationary section $\boldsymbol{\mu}^*$, the metric cohomology group is non-trivial ($H^1(T_{\boldsymbol{\mu}^*}\G, T_{\boldsymbol{\mu}^*}\F) \neq 0$), then the global infimum of the Entropic Causal Dirichlet Energy is strictly bounded away from zero ($\inf \E_\eps > 0$).
\end{lemma}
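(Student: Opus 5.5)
We argue by contraposition: assuming $\inf_{\boldsymbol{\mu}\in\C^0}\E_\eps(\boldsymbol{\mu}) = 0$, we show that $H^1(T_{\boldsymbol{\mu}^*}\G, T_{\boldsymbol{\mu}^*}\F) = 0$ at the optimal stationary section. The proof has three steps.

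\textbf{Step 1 (attainment).} By the compactness clause of Assumption \ref{assump:regularity}, $\C^0=\prod_v \Pspace(\M_v)$ is weakly compact. Under the bi-Lipschitz clause, each pushforward $\Phi_{uv\#}$ is weakly continuous, and $\Weps^2$ is weakly continuous on compact supports, since the cost is bounded and continuous. Hence $\E_\eps$ is weakly continuous, and its infimum is attained at a minimizer $\boldsymbol{\mu}^*$. We take this minimizer to be the optimal stationary section of the statement. At $\boldsymbol{\mu}^*$ the vanishing infimum forces every edge term to vanish, so $\Weps^2(\mu^*_v,\Phi_{uv\#}\mu^*_u)=0$ for every edge. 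We use the debiased Sinkhorn divergence, which is positive definite on compact domains. This yields the exact gluing $\mu^*_v=\Phi_{uv\#}\mu^*_u$, so $\boldsymbol{\mu}^*$ is a global section of the nonlinear sheaf.

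\textbf{Step 2 (linearization at a global section).} Differentiate the gluing identities along any tangent perturbation. By the chain-rule computation in the proof of Lemma \ref{lemma:entropic_pullback}, the linearized restriction acts as $d\Phi_{uv\#}\xi=(D\Phi_{uv}\,\xi)\circ\Phi_{uv}^{-1}$, projected onto gradient fields. The bi-Lipschitz assumption makes this a bounded isomorphism $T_{\mu^*_u}\Pspace(\M_u)\to T_{\mu^*_v}\Pspace(\M_v)$ with bounded inverse. The coboundary is $(d\boldsymbol{\xi})_e = \xi_v - d\Phi_{uv\#}\xi_u$ for each edge $e=(u,v)$.

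\textbf{Step 3 (surjectivity of $d$).} We aim to show that $\text{Im}(d)$ contains $\text{Ker}(d_1)$, so that the quotient vanishes. The graph is a DAG, so we process nodes in topological order and solve $(d\boldsymbol{\xi})_e=\eta_e$ for a prescribed 1-cochain $\eta$. Since there are no 2-cells, $d_1=0$ and $\text{Ker}(d_1)$ is all of $T_{\boldsymbol{\mu}^*}\C^1$. We therefore need exactly that $d$ is surjective.

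\textbf{Main obstacle.} Surjectivity is where the argument is delicate. If a node has two parents, or the underlying undirected graph has a cycle (as with $A\to B\to C$, $A\to C$), the constraints on $\xi_v$ coming from different incoming edges can conflict. In general, then, $d$ is not surjective for cyclic underlying graphs. The step I would try first is to exploit the minimizer property rather than the algebra alone. The first-order condition from Theorem \ref{thm:entropic_laplacian}, $d^*(\text{stress})=0$, holds at $\boldsymbol{\mu}^*$. When $\E_\eps(\boldsymbol{\mu}^*)=0$, all Sinkhorn potentials are constant, so the stress cochain is zero. I would then argue that any nonzero harmonic class $\eta\in\text{Ker}(d^*)$ would supply a descent direction contradicting the second-order optimality at zero energy.

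\textbf{Fallback interpretation.} If this approach fails, I would reinterpret the obstruction class as the class of the stress cochain itself, namely the edgewise gradients of the Sinkhorn potentials, in $\text{Ker}(d_1)/\text{Im}(d)$. This class is nonzero only if the stress cochain is nonzero, which requires some edge potential to be nonconstant. That forces some $\Weps^2>0$ at the minimizer, and hence $\inf\E_\eps = \E_\eps(\boldsymbol{\mu}^*)>0$. This last reading is likely the intended one, and it makes the implication essentially immediate once attainment from Step 1 is established.
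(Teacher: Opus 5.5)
Your Steps 1--3 follow the paper's contraposition exactly: attain the infimum, read off the exact gluing $\mu_v=\Phi_{uv\#}\mu_u$, and conclude $T_{\boldsymbol{\mu}}\C^1/\overline{\text{Im}(d)}=0$. The paper closes Step 3 only by asserting that at a global section ``$dV=W$ is globally integrable for any $W\in\text{Ker}(d_1)$''. You are right to distrust that step, and the gap cannot be filled, because the implication itself fails under the paper's definitions.

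\textbf{Counterexample.} Take the running triangle $A\to B\to C$, $A\to C$, with every $\M_v$ a flat torus and $\Phi_{AB}=\Phi_{BC}=\Phi_{AC}=\mathrm{id}$; this is compact and bi-Lipschitz. Use the debiased divergence of your Step 1.
\begin{itemize}
\item Every diagonal configuration $(\mu,\mu,\mu)$ has $\E_\eps=0$, so it is an optimal stationary section and $\inf\E_\eps=0$.
\item All linearized restrictions are the identity, so the paper's block matrix gives $dV=(V_A-V_B,\,V_B-V_C,\,V_A-V_C)$.
\item Hence $\text{Im}(d)=\{W: W_3=W_1+W_2\}$, a proper closed subspace, and $H^1\cong T_\mu\Pspace(\M_C)\neq 0$.
\end{itemize}
More generally, eliminating $V_B$ and $V_C$ shows $\text{coker}(d)\cong\text{coker}\bigl(d\Phi_{AC\#}-d\Phi_{BC\#}\,d\Phi_{AB\#}\bigr)\cong\text{coker}(I-M)$. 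Here $M=(d\Phi_{AC\#})^{-1}d\Phi_{BC\#}\,d\Phi_{AB\#}$ is the linearized holonomy around the cycle. This cokernel is the whole stalk exactly when $M=I$, for instance when the mechanisms commute. So the linearized $H^1$ grows with coherence, not with frustration. Step 3 is therefore false in general, not merely delicate.

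\textbf{Your two rescue routes.} Neither proves the stated lemma.
\begin{itemize}
\item The second-order route has nothing to contradict. At a zero-energy global minimum of a nonnegative functional there is no descent direction at all. Moreover, elements of $\text{Ker}(d^*)\subset T_{\boldsymbol{\mu}}\C^1$ are edge cochains, not motions of the node measures. The example above has nonzero harmonic classes at zero energy.
\item The fallback is internally sound, with one correction. With the debiased divergence, what vanishes at an exact section is the debiased first variation (cross potential minus symmetric self-potential), not the raw Sinkhorn potential. The real problem is that it replaces the hypothesis $H^1\neq 0$ by nonvanishing of one particular class, that of the stress cochain at a global minimizer. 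Nonzero stress at the minimizer already forces positive energy, so this is a near-tautology rather than the lemma.
\end{itemize}

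\textbf{Caveat on Step 1.} Weak compactness holds for all Borel probability measures on compact $\M_v$. It fails for the absolutely continuous class $\Pspace(\M_v)$ on which the tangent spaces are built, since concentrating densities converge to Dirac masses. So the minimizer need not lie where $T_{\boldsymbol{\mu}}\C^0$ is defined, even though the paper's Assumption asserts this compactness.
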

\begin{proof}
We prove this by contraposition. Assume the global minimum is zero: $\inf \E_\eps = 0$. By the lower semi-continuity of the Wasserstein metric and the compactness of $\Pspace(\M_v)$ (Assumption \ref{assump:regularity}), this minimum is attainable at some global section $\tilde{\boldsymbol{\mu}}$. A zero Dirichlet energy strictly requires that the metric sheaf discrepancy vanishes on all edges: $\delta \tilde{\boldsymbol{\mu}} = 0$, implying perfect global adherence to all non-linear pushforward mechanisms $\Phi_e$.

At this perfectly coherent section $\tilde{\boldsymbol{\mu}}$, any local tangent perturbation $V \in T_{\tilde{\boldsymbol{\mu}}} \C^0$ can be seamlessly absorbed by the exact pushforward differentials along the edges, meaning the coboundary equation $d V = W$ is globally integrable for any valid discrepancy field $W \in \text{Ker}(d_1)$. Consequently, the exact subspace fully covers the cycle space, forcing $T_{\tilde{\boldsymbol{\mu}}}\C^1 / \overline{\text{Im}(d)} \cong 0$. This strictly yields $H^1 = 0$, contracting the premise. Therefore, $H^1 \neq 0 \implies \inf \E_\eps > 0$.
\end{proof}

\begin{theorem}[Topological Frustration Inequality] \label{thm:frustration}
Let $\G$ be a causal graph with a non-trivial first metric cohomology $H^1(T_{\boldsymbol{\mu}}\G, \F) \neq 0$. Then, the Entropic Causal Dirichlet Energy is strictly bounded away from zero. There exists a topological constant $\E^* > 0$, dependent solely on the geometry of the conflicting pushforward maps $\{\Phi_e\}_{e \in \Edge}$ and the graph topology, such that for any global joint assignment of measures $\boldsymbol{\mu} \in \C^0$:
\begin{equation}
    \inf_{\boldsymbol{\mu} \in \C^0} \E_\eps(\boldsymbol{\mu}) \ge \E^* > 0
\end{equation}
\end{theorem}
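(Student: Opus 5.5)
The plan is to obtain Theorem \ref{thm:frustration} directly from Lemma \ref{lemma:equivalence}, and then to identify the constant $\E^*$ with the attained minimum of the energy.

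\textbf{Step 1 (positivity).} Since $H^1(T_{\boldsymbol{\mu}}\G,\F)\neq 0$ at the stationary section, Lemma \ref{lemma:equivalence} gives $\inf_{\C^0}\E_\eps>0$.

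\textbf{Step 2 (define the constant).} I would set
\[
\E^* := \inf_{\boldsymbol{\mu}\in\C^0}\E_\eps(\boldsymbol{\mu}).
\]
The inequality $\E_\eps(\boldsymbol{\mu})\ge \E^*$ for every $\boldsymbol{\mu}\in\C^0$ then holds by definition, and $\E^*>0$ by Step 1.

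\textbf{Step 3 (the infimum is attained).} By Assumption \ref{assump:regularity}(1), each $\Pspace(\M_v)$ is weakly compact, so $\C^0$ is compact in the product topology. Each term $\Weps^2(\mu_i,\Phi_{pi\#}\mu_p)$ is continuous under weak convergence: the cost is continuous and bounded on the compact product $\M_i\times\M_i$, and $\mu\mapsto\Phi_{pi\#}\mu$ is weakly continuous because $\Phi_{pi}$ is continuous. Hence $\E_\eps$ attains its infimum at some $\boldsymbol{\mu}^\dagger$. This upgrades the statement from ``infimum is positive'' to ``minimum value is positive,'' and makes $\E^*=\E_\eps(\boldsymbol{\mu}^\dagger)$ a concrete number.

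\textbf{Step 4 (dependence of $\E^*$).} I would check that $\E^*$ depends only on the data $\{\Phi_e\}$, $\{\omega_e\}$, $\eps$ and the graph. This is immediate, because $\boldsymbol{\mu}$ has been minimized out. The weights and $\eps$ should be regarded as part of the fixed ``geometry.''

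\textbf{Main obstacle.} The difficult step is Step 1, i.e. the soundness of Lemma \ref{lemma:equivalence} in the entropic setting. For $\eps>0$ the raw entropic cost $\Weps^2(\mu,\mu)$ is generally strictly positive. If $\Weps^2$ denotes this raw cost, then $\inf\E_\eps>0$ holds trivially and does not reflect any obstruction. If instead it denotes the debiased Sinkhorn divergence, then zero energy forces $\Phi_{pi\#}\mu_p=\mu_i$ exactly, which is the case the contrapositive argument uses.

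I would therefore first fix the debiased interpretation. Under that interpretation, Step 1 needs only the contrapositive already proved: zero energy gives an exact global section, at which the linearized coboundary is surjective onto cycles. I would also note that Step 3 supplies the attainment used inside the lemma's proof, so the two arguments are mutually consistent.

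If a quantitative $\E^*$ were wanted instead of an existential one, I would try to bound the energy from below by a path-defect quantity. Along the two paths of an undirected cycle in $\G$, with composed maps $\Phi_{\gamma_1}$ and $\Phi_{\gamma_2}$, one would use the triangle inequality for $\W$ together with the bi-Lipschitz constants to lower-bound $\sum_e\W(\Phi_{e\#}\mu_u,\mu_v)$ by $c\,\inf_\mu\W(\Phi_{\gamma_1\#}\mu,\Phi_{\gamma_2\#}\mu)$. One would then compare $\Weps$ with $\W$ up to $O(\eps\log(1/\eps))$ terms. I expect this route to be delicate, and I would fall back on the compactness argument of Step 3.
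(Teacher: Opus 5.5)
Your route is the paper's own: positivity from Lemma~\ref{lemma:equivalence}, then compactness and (semi)continuity to attain the infimum, with $\E^*$ the minimum. You are right that attainment is superfluous once $\E^*:=\inf\E_\eps$. The genuine gap is in Step~1, at exactly the point you declare safe. Under the debiased reading you accept that at an exact global section ``the linearized coboundary is surjective onto cycles,'' hence $H^1=0$. That is false.

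Take the triangle $A\to B\to C$, $A\to C$ with translations $\Phi_{AB}(x)=x+a$, $\Phi_{BC}(x)=x+b$, $\Phi_{AC}(x)=x+a+b$ between the balls $\M_A=K$, $\M_B=K+a$, $\M_C=K+a+b$ in $\R^n$. Let $\mu_A$ be a smooth positive density on $K$, with $\mu_B=\Phi_{AB\#}\mu_A$ and $\mu_C=\Phi_{AC\#}\mu_A=\Phi_{BC\#}\mu_B$. Every debiased edge divergence vanishes, so $\E_\eps=0$ and this is an optimal stationary section. Here $d\Phi_{AB\#}=\tau_a$, $d\Phi_{BC\#}=\tau_b$, $d\Phi_{AC\#}=\tau_{a+b}$, where $\tau_sV:=V(\cdot-s)$ are isometries. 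So
\[
dV=\bigl(\tau_aV_A-V_B,\ \tau_bV_B-V_C,\ \tau_{a+b}V_A-V_C\bigr).
\]
The bounded map $\ell(W_1,W_2,W_3):=\tau_bW_1+W_2-W_3$ is onto $T_{\mu_C}\Pspace(\M_C)$ and satisfies $\ell\circ d=0$. Hence $\overline{\text{Im}(d)}\subseteq\text{Ker}(\ell)\subsetneq T_{\boldsymbol{\mu}}\C^1$, so $H^1\neq 0$ (indeed infinite-dimensional) while $\inf\E_\eps=0$.

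This is not a corner case. By the chain rule for $d\Phi_\#$, the analogous holonomy functional kills $\text{Im}(d)$ at every exact section of mechanisms that are consistent around an undirected cycle. With $d_1\equiv 0$, that alone gives $H^1\neq 0$. So the hypothesis holds in precisely the coherent situation the contrapositive must exclude, and no reading of $\Weps^2$ repairs Step~1. The paper's proof rests entirely on Lemma~\ref{lemma:equivalence}, so it has the same defect.

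A positive lower bound needs a nonlinear hypothesis like the one in your fallback. An example is a holonomy defect such as $\inf_\mu\W\bigl(\Phi_{AC\#}\mu,(\Phi_{BC}\circ\Phi_{AB})_\#\mu\bigr)>0$, which the tangent-level condition $H^1\neq 0$ does not supply. There are also two smaller problems.

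\begin{itemize}
\item Step~3 fails as written. Absolutely continuous measures are not weakly closed (Dirac masses are weak limits), so $\C^0$ is not compact and a minimizer need not exist. This is harmless for the inequality once $\E^*:=\inf\E_\eps$. However, it removes the minimizer that the lemma's proof, and your ``mutual consistency'' remark, rely on.
\item Your aside on the raw cost is wrong. Pointwise positivity does not give a positive infimum. Using the coupling $\mu\otimes\mu$, the raw entropic cost between $\mu$ and itself is at most $\iint c\,d\mu\,d\mu$. This tends to $0$ as the coherent configuration above is concentrated near a point. So under the raw reading the same example again gives $\inf\E_\eps=0$ with $H^1\neq 0$.
\end{itemize}
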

\begin{proof}
By Lemma \ref{lemma:equivalence}, the presence of the strict cohomological obstruction ($H^1 \neq 0$) strictly implies the variational metric obstruction. Since $\Pspace(\M_i)$ are compact under Assumption \ref{assump:regularity}, and $\Weps^2$ is lower semi-continuous with respect to the weak topology, the infimum of the Dirichlet energy is attainable. Because the variational obstruction holds, this attained minimum cannot be zero. Thus, the minimum must be strictly positive $\E^* > 0$.
\end{proof}

\subsection{Linearization on the Tangent Sheaf: The Wasserstein Hodge Decomposition}

As formalized in Section \ref{sec:causal_sheaf}.2, lifting the sheaf to the tangent bundle provides a rigorous algebraic structure. This linear structure immediately yields a fundamental orthogonal decomposition for causal perturbations.

\begin{theorem}[Wasserstein Causal Hodge Decomposition] \label{thm:hodge_decomposition}
Let $\boldsymbol{\mu}$ be a joint counterfactual configuration. The Hilbert space of causal perturbations $T_{\boldsymbol{\mu}} \C^0$ admits a strict orthogonal decomposition with respect to the Otto inner product:
\begin{equation}
    T_{\boldsymbol{\mu}} \C^0 = \overline{\text{Im}(d)} \oplus \text{Ker}(\Delta_{T})
\end{equation}
where $\overline{\text{Im}(d)}$ is the topological closure of the image of the coboundary operator, and $\Delta_T = d^* d$ is the Tangent Sheaf Laplacian.
\end{theorem}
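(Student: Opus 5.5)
The plan is to reduce the statement to the closed-range-free orthogonal decomposition theorem for a bounded linear operator between Hilbert spaces. First I have to fix a typing issue. Literally, $\overline{\text{Im}(d)}$ lives in $T_{\boldsymbol{\mu}}\C^1$, while $\text{Ker}(\Delta_T)$ with $\Delta_T = d^*d$ lives in $T_{\boldsymbol{\mu}}\C^0$. I will therefore prove the two correctly typed decompositions
\begin{equation*}
T_{\boldsymbol{\mu}}\C^0 = \overline{\text{Im}(d^*)} \oplus \text{Ker}(\Delta_T), \qquad T_{\boldsymbol{\mu}}\C^1 = \overline{\text{Im}(d)} \oplus \text{Ker}(d^*),
\end{equation*}
and read the theorem as their combination. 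The first is the natural reading on the space of causal perturbations.

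\textbf{Step 1 (well-definedness and boundedness of $d$).} For an edge $e=(u,v)$ and $\nabla\phi \in T_{\mu_u}\Pspace(\M_u)$, I take $d\Phi_{uv\#}(\nabla\phi)$ to be the Eulerian velocity $(D\Phi_{uv}\nabla\phi)\circ\Phi_{uv}^{-1}$ of the curve $(\Phi_{uv}\circ(I+t\nabla\phi))_\#\mu_u$, as in the proof of Lemma \ref{lemma:entropic_pullback}. This vector field need not be a gradient, so I compose with the $L^2(\Phi_{uv\#}\mu_u)$-orthogonal projection onto the tangent space.

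There is a further point about which measure the edge term lives in. The edge stalk is written $T_{\mu_v}\Pspace(\M_v)$, but the natural base point is $\Phi_{uv\#}\mu_u$. I will therefore compare the difference $d\Phi_{uv\#}V_u - V_v$ in $L^2(\mu_v)$ after a bounded transport identification. This is harmless at coherent sections, where the two measures coincide.

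With this definition, change of variables and the bi-Lipschitz bound of Assumption \ref{assump:regularity} give
\begin{equation*}
\|d\Phi_{uv\#}\nabla\phi\|_{L^2} \le \text{Lip}(\Phi_{uv})\,\|\nabla\phi\|_{L^2(\mu_u)}.
\end{equation*}
Since $\Edge$ is finite, $d = (d\Phi_{uv\#}V_u - V_v)_{e}$ is then a bounded linear map between Hilbert spaces, and so $d^*$ exists by the Riesz representation theorem.

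\textbf{Step 2 (kernel identity).} For $V \in T_{\boldsymbol{\mu}}\C^0$ we have $\langle \Delta_T V, V\rangle = \|dV\|^2$. Hence $\Delta_T V = 0$ forces $dV = 0$, and the converse is trivial, so $\text{Ker}(\Delta_T) = \text{Ker}(d)$. This subspace is closed because $d$ is continuous.

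\textbf{Step 3 (orthogonality and spanning).} I use the general identity $\text{Ker}(A) = \text{Im}(A^*)^{\perp}$ for a bounded operator $A$. Applied with $A=d$, it gives $\text{Ker}(d) = \text{Im}(d^*)^\perp$. Since the orthogonal complement of the complement of any subspace is its closure, and a Hilbert space splits as any closed subspace plus its complement, $T_{\boldsymbol{\mu}}\C^0 = \overline{\text{Im}(d^*)} \oplus \text{Ker}(\Delta_T)$.

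The same argument with $A = d^*$ and $A^{**}=d$ yields $T_{\boldsymbol{\mu}}\C^1 = \overline{\text{Im}(d)} \oplus \text{Ker}(d^*)$. This is the form relevant to $H^1$: when $\text{Ker}(d_1)$ is all of $\C^1$, as on a graph, $H^1$ is identified with $\text{Ker}(d^*)$ modulo the non-closedness of $\text{Im}(d)$. Closures are essential throughout, since nothing guarantees that $d$ has closed range in infinite dimensions.

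\textbf{Main obstacle.} I expect the hard step to be Step 1. One needs to verify that the linearized pushforward actually lands in the tangent space, which requires the projection, and to reconcile the base points $\Phi_{uv\#}\mu_u$ versus $\mu_v$ at non-coherent sections. I will first try to carry out all of the estimates at the level of $L^2(\mu)$ vector fields, where boundedness is immediate, and only afterwards project. Once $d$ is known to be a bounded operator, Steps 2 and 3 are purely Hilbertian and require no further geometry.
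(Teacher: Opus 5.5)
Your argument uses the same two ingredients as the paper's proof: the projection theorem $\text{Ker}(A)^{\perp}=\overline{\text{Im}(A^*)}$ and the identity $\langle d^*dV,V\rangle=\|dV\|^2$. It lands on a different, correctly typed decomposition, and the difference is substantive.

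The paper asserts $T_{\boldsymbol{\mu}}\C^0=\overline{\text{Im}(d)}\oplus\text{Ker}(d^*)$, which is really a splitting of $T_{\boldsymbol{\mu}}\C^1$. It then reads $\|dV\|^2=0$ as $V\in\text{Ker}(d^*)$, but that computation actually gives $\text{Ker}(\Delta_T)=\text{Ker}(d)$, exactly your Step 2. So your pairings are the right repair: $\overline{\text{Im}(d^*)}$ with $\text{Ker}(d)$ in $T_{\boldsymbol{\mu}}\C^0$, and $\overline{\text{Im}(d)}$ with $\text{Ker}(d^*)$ in $T_{\boldsymbol{\mu}}\C^1$.

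\textbf{State that the theorem is corrected.} No ``combination'' of your two splittings gives the displayed identity as an internal orthogonal decomposition. The nearest true reading is an isometric isomorphism $T_{\boldsymbol{\mu}}\C^0\cong\overline{\text{Im}(d)}\oplus\text{Ker}(\Delta_T)$. It is induced by the partial isometry $U$ in the polar decomposition $d=U(d^*d)^{1/2}$, which carries $\overline{\text{Im}(d^*)}$ onto $\overline{\text{Im}(d)}$.

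\textbf{The second splitting is the one the paper relies on.} For example, $T_{\boldsymbol{\mu}^*}\C^1/\overline{\text{Im}(d)}\cong\text{Ker}(d^*)$ in Theorem~\ref{thm:cohomological_necessity} uses the harmonic part $\text{Ker}(d^*)=\text{Ker}(dd^*)$. By contrast, the space $\text{Ker}(\Delta_T)=\text{Ker}(d)$ in your ``natural reading'' consists of infinitesimal global sections. It does not carry the residual $H^1$ stress the paper attaches to it.

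\textbf{Step 1 is more than the paper provides.} The paper merely calls $d^*$ well defined, and in the appendix adjoint computation it silently takes $\mu_v=\Phi_{uv\#}\mu_u$. The base-point identification you leave open is a modeling choice. Any bounded (even closed, densely defined) $d$ suffices for Steps 2 and 3.
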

\begin{proof}
Because $T_{\boldsymbol{\mu}} \C^0$ is an infinite-dimensional Hilbert space (the $L^2(\boldsymbol{\mu})$ closure of gradient fields), the image of the differential operator $d$ is not guaranteed to be closed. By the projection theorem in functional analysis, the orthogonal complement of the kernel of the adjoint $d^*$ is strictly the topological closure of the image of $d$. Thus, $T_{\boldsymbol{\mu}} \C^0 = \overline{\text{Im}(d)} \oplus \text{Ker}(d^*)$. 

Furthermore, for any perturbation vector field $V \in T_{\boldsymbol{\mu}} \C^0$, $V \in \text{Ker}(\Delta_T) \iff \langle d^* d V, V \rangle = 0 \iff \|d V\|^2 = 0 \iff V \in \text{Ker}(d^*)$. Thus, $\text{Ker}(\Delta_T) = \text{Ker}(d^*)$, completing the rigorous orthogonal splitting. 
\end{proof}

\textbf{Physical Significance for Counterfactuals:} Any instantaneous movement $V$ of the probability measures can be uniquely split into two orthogonal components:
\begin{itemize}
    \item $V_{exact} \in \overline{\text{Im}(d)}$: The \textit{Exact Flow}. These are perturbations that strictly obey the local deterministic structural equations (mechanisms).
    \item $V_{harmonic} \in \text{Ker}(\Delta_{T})$: The \textit{Harmonic Flow}. These are perturbations that absorb the residual topological frustration.
\end{itemize}
When a deterministic system encounters a Counterfactual Event Horizon, trying to force the flow entirely into $\overline{\text{Im}(d)}$ causes mathematical singularities (manifold tearing). The Entropic Sheaf Laplacian fundamentally acts as an orthogonal projector, dynamically bleeding the unresolvable topological stress into the Harmonic subspace $\text{Ker}(\Delta_{T})$.

\subsection{The Cohomological Necessity of Causal Equilibrium}

We now prove that any continuous optimization algorithm capable of globally resolving structural conflicts is mathematically isomorphic to computing the harmonic representative of the first metric cohomology group. 

\begin{theorem}[Cohomological Necessity of Causal Equilibrium]
\label{thm:cohomological_necessity}
Let $\G$ be a causal graph exhibiting a topological obstruction $H^1 \neq 0$. Any continuous algorithm that successfully converges to a stable, Pareto-optimal stationary state $\boldsymbol{\mu}^*$ of the global causal frustration $\E_\eps(\boldsymbol{\mu})$ must yield a terminal stress field $\boldsymbol{R}^*$ that is exactly the unique harmonic representative of the first cohomology group $H^1$.
\end{theorem}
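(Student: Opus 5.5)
The argument is variational: I will read off the terminal stress from the first-order conditions at $\boldsymbol{\mu}^*$, then use the linear structure of the tangent sheaf to identify it with a harmonic cohomology representative.

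\textbf{Step 1 (defining the stress).} Let $\boldsymbol{\mu}^*$ be a stable, Pareto-optimal stationary point of $\E_\eps$. I define the terminal stress field $\boldsymbol{R}^* \in T_{\boldsymbol{\mu}^*}\C^1$ edgewise. On $e=(u,v)$, set $R^*_e = \nabla f^{(\eps)}_{v \leftarrow u}$, the Otto-tangent representative of the Sinkhorn discrepancy $\Weps^2(\mu_v^*, \Phi_{uv\#}\mu_u^*)$, weighted by $\omega_{uv}$.

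\textbf{Step 2 (stationarity gives co-closedness).} Stationarity means the drift in Theorem~\ref{thm:entropic_laplacian} vanishes at every node. By Lemma~\ref{lemma:entropic_pullback}, the parent contributions $\nabla(g^{(\eps)}_{c\leftarrow i}\circ\Phi_{ic})$ are exactly $d\Phi_{ic\#}^*$ applied to the edge potentials. So the vanishing node-wise drift can be rewritten as $d^*\boldsymbol{R}^* = 0$ in $T_{\boldsymbol{\mu}^*}\C^0$, the Euler--Lagrange equation. The thermal term $\frac{\eps}{2}\Delta\mu_i$ must be dealt with separately. I would absorb it by working with the entropy-augmented energy, so that $\log\mu_i^*$ is part of the node potential. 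Equivalently, I would note that at a stationary point it contributes an exact gradient at the node level, which can be moved into the edge potentials.

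\textbf{Step 3 (cohomology class and harmonicity).} By Theorem~\ref{thm:frustration} we have $\E_\eps(\boldsymbol{\mu}^*) \ge \E^* > 0$, so $\boldsymbol{R}^* \neq 0$. The field $\boldsymbol{R}^*$ lies in $\text{Ker}(d_1)$, trivially if $T\C^2 = 0$ on a graph. Its class $[\boldsymbol{R}^*] \in H^1(T_{\boldsymbol{\mu}^*}\G, T_{\boldsymbol{\mu}^*}\F)$ is then well defined. Applying the projection-theorem argument of Theorem~\ref{thm:hodge_decomposition} on the $1$-cochain side gives $T_{\boldsymbol{\mu}^*}\C^1 = \overline{\text{Im}(d)} \oplus \text{Ker}(d^*)$. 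By Step 2, $\boldsymbol{R}^*$ lies in $\text{Ker}(d^*) = \text{Ker}(dd^*)$, i.e.\ it is harmonic.

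\textbf{Step 4 (uniqueness).} If $\boldsymbol{R}'$ is another harmonic field in the same class, then $\boldsymbol{R}^*-\boldsymbol{R}' \in \overline{\text{Im}(d)} \cap \text{Ker}(d^*) = \{0\}$, so the representative is unique. The claim that every successful algorithm produces this field then follows because the stationarity conditions depend only on $\boldsymbol{\mu}^*$, not on the dynamics used to reach it. Stability and Pareto-optimality are needed only to rule out degenerate critical points where the linearization of Step 2 fails.

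\textbf{Main obstacle.} Step 2 is the hard part: making the identification of the nonlinear first-order condition with $d^*\boldsymbol{R}^*=0$ rigorous. That requires showing the linearized pushforward $d\Phi_{uv\#}$ has the pullback of Lemma~\ref{lemma:entropic_pullback} as its Hilbert adjoint on the Otto tangent spaces. It also requires controlling the entropic diffusion term. I would first verify this in the case $\eps \to 0$ with affine $\Phi_{uv}$, where the Otto adjoint is explicit, and then extend by the smoothness of $g^{(\eps)}$.

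A secondary issue is whether $H^1$ is meant as the quotient by $\text{Im}(d)$ or by its closure. Uniqueness of the harmonic representative needs the closure, so I would state the result modulo $\overline{\text{Im}(d)}$.
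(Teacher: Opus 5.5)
Your outline is the paper's own proof: stationarity $\Rightarrow d^*\boldsymbol{R}^*=0$ via Lemma~\ref{lemma:entropic_pullback}, then $d_1\equiv 0$ on a graph, then the projection theorem. Your Steps~3--4 are sound. They are cleaner than the paper's, since you place the splitting $\overline{\text{Im}(d)}\oplus\text{Ker}(d^*)$ on $T_{\boldsymbol{\mu}^*}\C^1$, where it belongs.

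The gap is Step~2. The paper also asserts it in one line, and it fails for the $\boldsymbol{R}^*$ you define. On an edge $e=(u,v)$ the drift of Theorem~\ref{thm:entropic_laplacian} uses two \emph{different} dual potentials of the same transport problem: $f_{v\leftarrow u}$, living on $\mu_v$, at the child, and $g_{v\leftarrow u}$, living on $\Phi_{uv\#}\mu_u$, at the parent. The operator $d^*$ sends a single edge field $W_e$ to $-W_e$ at $v$ and to $J_{\Phi_{uv}}^T\,W_e\circ\Phi_{uv}$ at $u$. Writing the drift as $d^*$ of one field therefore forces $\nabla g_{v\leftarrow u}=-\nabla f_{v\leftarrow u}$ on the support of $\Phi_{uv\#}\mu_u$. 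But at $\eps=0$ one actually has $\nabla g_{v\leftarrow u}=-(\nabla f_{v\leftarrow u})\circ T_e$, where $T_e$ is the optimal map from $\Phi_{uv\#}\mu_u$ to $\mu_v$.

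Your own test case already breaks it. Take $\Phi_{uv}=\mathrm{id}$ on $\R$, $\Phi_{uv\#}\mu_u=\mathcal{N}(0,\sigma^2)$, $\mu_v=\mathcal{N}(0,\tau^2)$ and cost $\frac12|y-z|^2$. Then $\nabla g(y)=(1-\tau/\sigma)\,y$ and $\nabla f(z)=(1-\sigma/\tau)\,z$, which are negatives of each other only if $\sigma=\tau$. At $\eps=0$, a pure translation between equal-shape measures hides this, because all fields are then constant.

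So the obstacle is not whether the VJP is the Otto adjoint. The adjoint formula \eqref{eq:adjoint_vjp} rests on a change of variables that uses $\mu_v=\Phi_{uv\#}\mu_u$. That identity is false on a frustrated edge. There $d\Phi_{uv\#}V_u\in L^2(\Phi_{uv\#}\mu_u)$ and $V_v\in L^2(\mu_v)$, and they cannot be subtracted without transporting one onto the other.

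One repair that makes Step~2 literally true is to put the edge stalks on the optimal coupling. Write $y$ for the $\Phi_{uv\#}\mu_u$-coordinate and $z$ for the $\mu_v$-coordinate, and let $\pi_e$ be the (entropic) optimal plan between $\Phi_{uv\#}\mu_u$ and $\mu_v$. Set $\gamma_e(dx,dz)=\mu_u(dx)\,\pi_e(dz\mid\Phi_{uv}(x))$ and $T\C^1=\bigoplus_e L^2(\gamma_e)$. Define $(dV)_e(x,z)=V_v(z)-D\Phi_{uv}(x)V_u(x)$ and $R_e(x,z)=\omega_{uv}\,(z-\Phi_{uv}(x))$. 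For the cost $\frac12|y-z|^2$ one has $\nabla f_{v\leftarrow u}(z)=z-\mathbb{E}_{\pi_e}[y\mid z]$ and $\nabla g_{v\leftarrow u}(y)=y-\mathbb{E}_{\pi_e}[z\mid y]$. This also holds for Sinkhorn potentials, by differentiating the Schr\"odinger system. The conditional expectations in $d^*$ then give
\[
(d^*\boldsymbol{R})_i=\nabla\Big[\sum_{p\in\pa(i)}\omega_{pi}f^{(\eps)}_{i\leftarrow p}+\sum_{c\in\ch(i)}\omega_{ic}\,g^{(\eps)}_{c\leftarrow i}\circ\Phi_{ic}\Big],
\]
which is exactly the topological drift. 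Then $\text{grad}_{\W}\E_\eps(\boldsymbol{\mu}^*)=0$ is $d^*\boldsymbol{R}^*=0$, and your Steps~3--4 go through verbatim for this modified tangent sheaf.

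Finally, do not absorb the thermal term. The hypothesis, as the paper uses it, is $\text{grad}_{\W}\E_\eps(\boldsymbol{\mu}^*)=0$. If you impose the Fokker--Planck equilibrium instead, stationarity gives $d^*\boldsymbol{R}^*=-\frac{\eps}{2}(\nabla\log\mu_i^*)_{i\in\V}$. This is generically nonzero, so $\boldsymbol{R}^*$ is not harmonic. ``Moving'' it into the edges replaces $\boldsymbol{R}^*$ by its orthogonal projection onto $\text{Ker}(d^*)$. That projection is the harmonic representative of $[\boldsymbol{R}^*]$ for \emph{every} edge field, so stationarity would do no work and the conclusion about $\boldsymbol{R}^*$ itself would be lost.
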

\begin{proof}
Any rational machine learning algorithm seeking to resolve the conflicting mechanisms must inherently attempt to minimize the Entropic Causal Dirichlet Energy $\E_\eps(\boldsymbol{\mu})$. By the fundamental theorem of the calculus of variations in Wasserstein space, the stationary equilibrium $\boldsymbol{\mu}^*$ must strictly satisfy the first-order optimality condition $\text{grad}_{\W} \E_\eps(\boldsymbol{\mu}^*) = 0$.

By the Entropic Pullback Lemma (\ref{lemma:entropic_pullback}), the chain rule of the Fréchet derivative rigorously translates the Wasserstein gradient to the application of the geometric adjoint $d^*$: $d^* \boldsymbol{R}^* = 0$. This implies $\boldsymbol{R}^* \in \text{Ker}(d^*)$.

In algebraic topology, since our causal graph $\G$ contains no 2-cells (faces), the higher boundary operator $d_1 \equiv 0$. Therefore, the metric cohomology group on the tangent sheaf is rigorously isomorphic to the quotient space: $H^1(T_{\boldsymbol{\mu}^*}\G, T_{\boldsymbol{\mu}^*}\F) \cong T_{\boldsymbol{\mu}^*}\C^1 / \overline{\text{Im}(d)}$. By the orthogonal splitting, this quotient space is exactly isomorphic to $\text{Ker}(d^*)$. Consequently, $\boldsymbol{R}^* \in \text{Ker}(d^*)$ constitutes the unique \textbf{Harmonic Representative} of the cohomology class $H^1$.
\end{proof}

\subsection{Strict Energy Dissipation via Metric Gradient Flows}
In Wasserstein space, pushforward operations via arbitrary non-linear neural networks $\Phi_{e}$ destroy global geodesic convexity. However, by formalizing the system within the Ambrosio-Gigli-Savaré (AGS) framework for gradient flows in metric spaces, we establish a strict energetic descent sequence without requiring strong convexity.

\begin{theorem}[Energy Dissipation Identity] \label{thm:convergence}
Let $\boldsymbol{\mu}_t = (\mu_{v, t})_{v \in \V}$ be an absolutely continuous curve of measures solving the Sheaf Laplacian system. By the Ambrosio-Gigli-Savaré (AGS) calculus for metric gradient flows, the Entropic Causal Dirichlet Energy acts as a strict Lyapunov functional. The energy dissipation rate is exactly governed by the metric derivative: 
\begin{equation}
    \frac{d}{dt} \E_\eps(\boldsymbol{\mu}_t) = - \sum_{i \in \V} \int_{\M_i} \left\| \nabla_x \left( \frac{\delta \E_\eps}{\delta \mu_{i, t}} \right) \right\|^2 d\mu_{i, t}(x) \le 0.
\end{equation}
Consequently, the trajectory stably dissipates energy and converges weakly to a stationary Wasserstein harmonic section $\boldsymbol{\mu}^*$ where the topological stress is Pareto-minimized. (The rigorous functional-analytic derivation is provided in Appendix \ref{app:ags_calculus}).
\end{theorem}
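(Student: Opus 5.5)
The plan is to treat the Sheaf Laplacian system of Theorem \ref{thm:entropic_laplacian} as the Wasserstein gradient flow of $\E_\eps$ on the product space $\C^0 = \prod_{v} \Pspace(\M_v)$. I would then derive the dissipation identity from a chain rule along absolutely continuous curves, in the sense of Ambrosio--Gigli--Savaré. The first step is to fix the product metric $\W^2(\boldsymbol{\mu},\boldsymbol{\nu}) = \sum_v \W^2(\mu_v,\nu_v)$ and compute the first variation of $\E_\eps$ node by node. For node $i$, the variation $\delta \E_\eps/\delta \mu_i$ is the potential $\psi_i = \sum_{p} \omega_{pi} f^{(\eps)}_{i\leftarrow p} + \sum_c \omega_{ic}\, g^{(\eps)}_{c\leftarrow i}\circ \Phi_{ic}$ appearing in Eq. \eqref{eq:fokker_planck}, plus $\tfrac{\eps}{2}\log\mu_i$ if the thermal term is absorbed as an entropy contribution. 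The parent-side terms come from Lemma \ref{lemma:entropic_pullback}. The child-side terms come from the standard envelope formula for the Sinkhorn cost in its second argument.

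Second, I would check the regularity needed for the chain rule. By Remark \ref{remark:lipschitz_relaxation}, the Sinkhorn potentials are $C^\infty$ with uniformly bounded gradients on the compact manifolds of Assumption \ref{assump:regularity}. Composition with bi-Lipschitz $\Phi_{ic}$ preserves Lipschitz bounds, so $\nabla\psi_i \in L^2(\mu_i)$ uniformly in $t$. This makes $|\partial \E_\eps|(\boldsymbol{\mu}) = \big(\sum_i \|\nabla \psi_i\|^2_{L^2(\mu_i)}\big)^{1/2}$ a strong upper gradient. The main obstacle is this step: showing that $|\partial\E_\eps|$ is lower semicontinuous and is a genuine upper gradient without $\lambda$-convexity, since non-linear pushforwards destroy geodesic convexity. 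I would first try to exploit the smoothness of $\eps>0$ potentials to show that $\E_\eps$ is $C^1$ along curves in the sense of AGS Theorem 10.3-type chain rules. The tool here is the weak continuity of Sinkhorn potentials in their marginals, which holds on compact spaces via uniform convergence of the Schrödinger system. If that fails, I would fall back on a $\lambda$-convexity-along-generalized-geodesics argument restricted to compact support, using the Lipschitz bound on $\nabla\psi_i$ to get a semiconcavity estimate.

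Third, with the chain rule in hand, $\frac{d}{dt}\E_\eps(\boldsymbol{\mu}_t) = \sum_i \int \langle \nabla\psi_i, v_{i,t}\rangle\, d\mu_{i,t}$, where $v_{i,t}$ is the velocity from the continuity equation. For the system \eqref{eq:fokker_planck}, the velocity is $v_{i,t} = -\nabla\psi_i$ (with the diffusion included in $\psi_i$). Substituting gives the stated identity, and hence monotone decrease. Finally, for convergence I would integrate the identity to get $\int_0^\infty \sum_i \|\nabla\psi_i\|^2\,dt \le \E_\eps(\boldsymbol{\mu}_0) < \infty$. Compactness of $\Pspace(\M_v)$ then gives weakly convergent subsequences $\boldsymbol{\mu}_{t_n}\to\boldsymbol{\mu}^*$ along times where the slope tends to zero. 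Lower semicontinuity of the slope makes $\boldsymbol{\mu}^*$ stationary, and Theorem \ref{thm:cohomological_necessity} identifies its residual stress as harmonic. Full-trajectory convergence, rather than subsequential convergence, would need an additional Łojasiewicz-type argument, which I would flag rather than claim.
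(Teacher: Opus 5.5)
Your skeleton matches the paper's. Appendix~\ref{app:ags_calculus} also reads Eq.~\eqref{eq:fokker_planck} as the steepest descent of $\E_\eps$ on $\C^0$ and gets the identity from an AGS chain rule. However, it only asserts that the slope equals $\|\text{grad}_{\W}\E_\eps\|_{\boldsymbol{\mu}}$ and that $|\boldsymbol{\mu}'| = |\partial\E_\eps|$, whereas you try to verify the chain rule. You are also right to claim only subsequential convergence; the paper's ``must weakly converge'' does not follow from dissipation plus a lower bound.

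There is, however, a genuine gap at the thermal term, which the appendix also passes over: your steps 2 and 3 use two different $\psi_i$. The functional $\E_\eps$ of Eq.~\eqref{eq:dirichlet_energy} has first variation exactly the Sinkhorn sum $\psi_i$; you cannot choose to absorb $\tfrac{\eps}{2}\log\mu_i$ into it. Meanwhile Eq.~\eqref{eq:fokker_planck} transports $\mu_i$ with velocity $-\nabla\psi_i - \tfrac{\eps}{2}\nabla\log\mu_i$. With the pure $\psi_i$, which is the one your regularity step controls, step 3 fails. On the closed manifolds $\M_i$ one gets
\begin{equation*}
\frac{d}{dt}\E_\eps(\boldsymbol{\mu}_t) = -\sum_{i\in\V}\int_{\M_i}\|\nabla\psi_i\|^2\,d\mu_{i,t} + \frac{\eps}{2}\sum_{i\in\V}\int_{\M_i}\Delta\psi_i\,d\mu_{i,t},
\end{equation*}
whose last term has no sign, so $\E_\eps$ itself need not be monotone along the diffusive system. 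With the absorbed $\psi_i$, you are proving the identity for a different functional, the free energy $\widetilde{\E}_\eps = \E_\eps + \tfrac{\eps}{2}\sum_{i}\int\mu_i\log\mu_i$. Then step 2 no longer supplies the chain rule: $\|\tfrac{\eps}{2}\nabla\log\mu_i\|^2_{L^2(\mu_i)}$ is a multiple of the Fisher information. That quantity is not controlled by the smoothness of Sinkhorn potentials, is not uniformly bounded in $t$, and may be infinite at $t=0$.

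So the obstacle you single out is not the real one. For the Sinkhorn part, your first route already yields the chain rule without any convexity: smooth first variations with uniformly bounded gradients, depending continuously on the marginals. The missing ingredient is a separate treatment of the entropy. On a compact $\M_i$ the Ricci curvature is bounded below by some $K$, so the Boltzmann entropy is $K$-displacement convex. The AGS theory then gives its chain rule, with squared slope equal to the Fisher information, and the smooth Sinkhorn part merely shifts the minimal subdifferential to $\nabla\psi_i + \tfrac{\eps}{2}\nabla\log\mu_i$. Alternatively, use parabolic regularity of the nondegenerate system for $t>0$ and differentiate directly.

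This changes what you can conclude in step 4. First, the integrated bound requires $\widetilde{\E}_\eps(\boldsymbol{\mu}_0)<\infty$, i.e.\ finite initial entropy. Second, the stationary limits satisfy $\nabla(\psi_i + \tfrac{\eps}{2}\log\mu_i^*) = 0$, the self-consistent Gibbs condition $\mu_i^*\propto e^{-2\psi_i/\eps}$, rather than $\text{grad}_{\W}\E_\eps(\boldsymbol{\mu}^*) = 0$. Theorem~\ref{thm:cohomological_necessity} starts from the latter condition, so it cannot be invoked as it stands.
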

\section{Geometric Tearing: Ricci Curvature and Finite-Time Singularities}
\label{sec:ricci_tearing}

The empirical phenomenon of ``Manifold Tearing'' observed in unregularized generative models demands a rigorous geometric explanation. In this section, we elevate our macroscopic topological obstruction ($H^1 \neq 0$) to a microscopic geometric singularity via the Lott-Sturm-Villani (LSV)\citep{lott2009ricci, sturm2006geometry} framework of synthetic Ricci curvature. We rigorously prove that topological frustration fundamentally induces strictly negative \textit{effective} curvature---manifesting as displacement concavity---within the unregularized causal energy landscape. Consequently, we demonstrate that any deterministic Wasserstein gradient flow ($\eps \to 0$) attempting to resolve this frustrated system inescapably suffers from finite-time singularities.
\subsection{Displacement Concavity and Effective Ricci Curvature on the Causal Sheaf}

In Lott-Sturm-Villani (LSV) theory, the synthetic Ricci curvature of a metric measure space is defined through the $\kappa$-displacement convexity of an entropy functional along Wasserstein geodesics. We translate this framework to our Causal Sheaf by defining the \textit{effective curvature} of the unregularized Causal Dirichlet Energy $\E_0$.

\begin{theorem}[Topological Frustration Induces Negative Effective Curvature] \label{thm:negative_ricci}
Let $\G$ be a causal graph exhibiting a strict cohomological obstruction $H^1(T_{\boldsymbol{\mu}}\G, T_{\boldsymbol{\mu}}\F) \neq 0$. In the Wasserstein neighborhood of the frustrated equilibrium, the unregularized energy landscape $\E_0$ exhibits strictly negative synthetic curvature along the harmonic flow directions. Specifically, $\E_0$ is strictly $\kappa$-concave ($\kappa_{global} < 0$) along generalized geodesics driven by the topological stress.
\end{theorem}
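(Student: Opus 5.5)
The plan is to compute the second variation of $\E_0$ along a generalized geodesic at the frustrated equilibrium $\boldsymbol{\mu}^*$, and to show that its Hessian has a strictly negative eigenvalue in the harmonic directions. I take the direction of the geodesic to be the terminal stress field $\boldsymbol{R}^*$ given by Theorem~\ref{thm:cohomological_necessity}. By Theorem~\ref{thm:frustration}, $\E_0(\boldsymbol{\mu}^*) \ge \E^* > 0$, so at least one edge discrepancy $\W^2(\mu_v^*, \Phi_{uv\#}\mu_u^*)$ is strictly positive. This residual gives a non-trivial optimal transport plan on that edge, with Brenier potential $\varphi_e$ and displacement field $\nabla\varphi_e - \mathrm{id}$. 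By Theorem~\ref{thm:cohomological_necessity}, these displacements assemble into $\boldsymbol{R}^* \in \text{Ker}(d^*)$, and $\boldsymbol{R}^* \neq 0$.

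For the curve, I take $\mu_{i,s} = (\mathrm{id} + s V_i)_\# \mu_i^*$ with $V = \boldsymbol{R}^*$ transported to the nodes. This is a generalized geodesic based at $\boldsymbol{\mu}^*$. I expand each edge term with the standard second-order formula for $\W^2$ along pushforwards. The term $\tfrac{d^2}{ds^2}\W^2(\mu_{v,s}, \Phi_{uv\#}\mu_{u,s})$ should split into two parts:
\begin{itemize}
\item a positive part $\|(dV)_e\|^2_{L^2}$, where $d$ is the linearized coboundary;
\item a curvature correction, coming from the Hessians $D^2\Phi_{uv}$ paired with the residual displacement $\nabla\varphi_e - \mathrm{id}$, together with the failure of semiconcavity of the $\W^2$ distance to a moving target.
\end{itemize}
Because $V \in \text{Ker}(\Delta_T)$, Theorem~\ref{thm:hodge_decomposition} gives $\|dV\|^2 = \langle \Delta_T V, V\rangle = 0$, so the positive part vanishes identically along harmonic directions. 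The second variation therefore reduces to the correction term alone.

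The sign of this correction is the main obstacle. I would handle it with the frustration structure. At a stationary point the first-order condition $d^*\boldsymbol{R}^* = 0$ means the edge forces balance. Perturbing along $\boldsymbol{R}^*$ should then increase alignment on some edges and decrease it on others. The aim is that the $-\W^2$ contribution from the tangent-cone (Alexandrov) comparison on the edges pulling opposite to $V$ dominates. My first attempt would use the fact that $\mu \mapsto \W^2(\mu, \nu)$ has its second variation bounded above by $\|V\|^2$, with equality only for displacement-aligned directions. Subtracting the aligned contribution, which is $\|dV\|^2 = 0$, should leave a strictly negative remainder proportional to $\E^*\|V\|^2$.

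Having this estimate only at $\boldsymbol{\mu}^*$ is not enough, so I would extend it to a Wasserstein neighborhood. For this I use the bi-Lipschitz bounds and compactness of Assumption~\ref{assump:regularity}, which make the second variation continuous along the geodesic. That yields a uniform constant $\kappa_{global} < 0$, say $\kappa_{global} = -c\,\E^*$ with $c$ depending on $\text{Lip}(\Phi_e)^{\pm 1}$. Integrating the second variation then gives the $\kappa$-concavity inequality along the generalized geodesics.

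I expect this step to be where the argument may fail rigorously. Obtaining a uniform negative bound from a zero positive part plus an indefinite correction term may require an extra nondegeneracy hypothesis on $\boldsymbol{\mu}^*$, such as a strict saddle or a nondegenerate Brenier Hessian.
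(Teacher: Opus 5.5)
Your closing suspicion is right, and the problem is more than a rigor issue. The step you flag, a strictly negative second variation along a harmonic direction at $\boldsymbol{\mu}^*$, cannot be proved, because it contradicts the equilibrium you base the curve at. You take $\boldsymbol{\mu}^*$ to be the stable, Pareto-optimal state of Theorem~\ref{thm:cohomological_necessity}, where (via Theorem~\ref{thm:frustration}) the energy is minimized. At a local minimizer, every curve $\boldsymbol{\mu}_s=(\mathrm{id}+sV)_\#\boldsymbol{\mu}^*$, $|s|<s_0$, satisfies $f(s)+f(-s)-2f(0)\ge0$ for $f(s)=\E_0(\boldsymbol{\mu}_s)$. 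But $\kappa$-concavity with $\kappa<0$ and $V\neq0$ forces $f(s)+f(-s)-2f(0)\le-|\kappa|\,s^2\|V\|^2<0$. The paper's own running example shows that no use of $H^1\neq0$ can rescue this.

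Take $\Phi_e(x)=x+a_e$ with $\delta=a_{AC}-a_{AB}-a_{BC}\neq0$ and unit weights, so Eq.~(\ref{eq:dirichlet_energy}) at $\eps=0$ reduces to $\sum_{e=(u,v)}\W^2(\mu_v,\Phi_{e\#}\mu_u)$. Jensen gives $\W^2(\mu,\nu)\ge|m_\mu-m_\nu|^2$ for the means. Hence $\E_0\ge|\delta|^2/3>0$, with equality exactly on configurations of translates of one common shape with suitably placed means. On that family $\E_0$ equals the quadratic $\sum_e|m_v-m_u-a_e|^2$. So along any node-wise constant field $V=(v_A,v_B,v_C)$ the second variation is exactly $2\|dV\|^2\ge0$. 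This includes any way of assigning the constant residuals $r_{AB}=r_{BC}=-r_{AC}=\delta/3$ (your $\boldsymbol{R}^*$) to the nodes. Along the harmonic field $V=(w,w,w)$, which has $dV=0$, $\E_0$ is exactly constant at every configuration. So there is no negative curvature along harmonic directions anywhere, although the paper asserts $H^1\neq0$ for this example.

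The same example shows why your comparison argument cannot supply the sign. Let $T_e$ be the optimal map from $\Phi_{e\#}\mu_u$ to $\mu_v$. Coupling $\Phi_e(x+sV_u(x))$ with $(\mathrm{id}+sV_v)(T_e\Phi_e x)$ bounds the upper second derivative of each edge term by
$2\|V_v\circ T_e\circ\Phi_e-D\Phi_eV_u\|^2_{L^2(\mu_u)}-2\int\langle T_e\circ\Phi_e-\Phi_e,\,D^2\Phi_e[V_u,V_u]\rangle\,d\mu_u$.
This is your ``positive part plus correction,'' but it is only an upper bound. When the first term vanishes (your $dV=0$) and $\Phi_e$ is affine, it gives only $\le0$, never a remainder $\le-c\,\E^*\|V\|^2$; the size of $\E^*$ says nothing about the sign of second variations. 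For curved $\Phi_e$ the correction term is indefinite, and at a minimizer the total is $\ge0$ anyway. A strict-saddle hypothesis would supply a negative direction, but it contradicts taking $\boldsymbol{\mu}^*$ stable and cannot be derived from $H^1\neq0$. It would be a new assumption that changes the theorem.

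There is also a type error upstream. $\boldsymbol{R}^*\in\text{Ker}(d^*)$ lives in the edge space $T_{\boldsymbol{\mu}}\C^1$, whereas $\text{Ker}(\Delta_T)=\text{Ker}(d)\subset T_{\boldsymbol{\mu}}\C^0$, since $\langle d^*dV,V\rangle=\|dV\|^2$. So ``$\boldsymbol{R}^*$ transported to the nodes'' is undefined, and nothing guarantees $dV=0$.

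The paper's proof does not close this gap either:
(i) it asserts that the quadratic-cost optimal map is non-monotone, which contradicts Brenier's theorem (that map is always the gradient of a convex function);
(ii) it uses $\int\langle V,\nabla_x^2(\delta\E_0/\delta\boldsymbol{\mu})V\rangle\,d\boldsymbol{\mu}$ as the Hessian, which ignores how $\delta\E_0/\delta\boldsymbol{\mu}$ depends on $\boldsymbol{\mu}$;
(iii) it passes from a negative eigenvalue somewhere to $\text{Hess}_{\W}\E_0(V_{harmonic},V_{harmonic})\le-C\|V_{harmonic}\|^2$ with no control of the direction.
As stated, the theorem does not follow from $H^1\neq0$ by either route.
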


\begin{proof}
In the Otto calculus framework, the effective synthetic Ricci curvature bounded below by $\kappa$ is equivalent to the $\kappa$-convexity of the functional $\E_0(\boldsymbol{\mu})$ along Wasserstein geodesics. We evaluate the second-order variation (the Wasserstein Hessian) of $\E_0$.

Let $\boldsymbol{\mu}(s)$ be a constant-speed geodesic in $\C^0$ parameterized by $s \in [0,1]$, driven by the tangent velocity vector field $V \in T_{\boldsymbol{\mu}} \C^0$. The Hessian quadratic form is:
\begin{equation}
    \text{Hess}_{\W} \E_0(V, V) = \left. \frac{d^2}{ds^2} \right|_{s=0} \E_0(\boldsymbol{\mu}(s)) = \int_{\M} \langle V(x), \nabla_x^2 \left( \frac{\delta \E_0}{\delta \boldsymbol{\mu}} \right) V(x) \rangle d\boldsymbol{\mu}(x)
\end{equation}
By the Causal Hodge Decomposition (Theorem \ref{thm:hodge_decomposition}), any perturbation $V$ decomposes into $V = V_{exact} \oplus V_{harmonic}$. Because $H^1 \neq 0$, the exact intersection of all pushforward maps is empty. 

According to Caffarelli's regularity theory, the lack of geometric target compatibility implies that the optimal transport map $T_{opt}$ realizing the cost along the conflicting cycle is strictly non-monotone. By Brenier's Theorem, this non-monotonicity strictly dictates that the underlying Kantorovich potential is not globally convex, strictly yielding at least one negative eigenvalue in the spatial Hessian $\nabla_x^2 \left( \frac{\delta \E_0}{\delta \boldsymbol{\mu}} \right)$ almost everywhere.

Let $-\lambda_{max} < 0$ be the supremum of this negative eigenvalue evaluated over the support of $\boldsymbol{\mu}$. Thus, for the harmonic perturbation field $V_{harmonic}$ attempting to resolve the non-integrable cycle, there exists a constant $C > 0$ (determined by the $H^1$ topological gap) such that:
\begin{equation}
    \text{Hess}_{\W} \E_0(V_{harmonic}, V_{harmonic}) \le -C \|V_{harmonic}\|_{L^2(\boldsymbol{\mu})}^2
\end{equation}
By the equivalence of $\kappa$-convexity and synthetic curvature in the LSV formulation, this strict upper bound strictly yields a negative effective curvature $\kappa_{global} \le -C < 0$ for the causal energy landscape.
\end{proof}

\subsection{The Finite-Time Singularity Theorem}

We now prove that this negative curvature guarantees catastrophic failure for standard (unregularized) neural ODEs and Flow Matching approaches.

\begin{theorem}[Finite-Time Manifold Tearing] \label{thm:finite_time_tearing}
Assume an unregularized Causal Generative Model ($\eps = 0$) follows the deterministic Wasserstein gradient flow $\partial_t \boldsymbol{\mu} = - \text{grad}_{\W} \E_0(\boldsymbol{\mu})$. Under the conditions of Theorem \ref{thm:negative_ricci} ($\kappa_{global} < 0$), the Jacobian determinant of the flow map collapses to zero in finite time $T_{tear} < \infty$, causing the probability density to blow up to infinity (loss of absolute continuity).
\end{theorem}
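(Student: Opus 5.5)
The plan is a Lagrangian (particle-level) argument. The Eulerian density blow-up will be reduced to a Riccati-type blow-up of the Jacobian of the characteristic flow.

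Write the deterministic flow $\partial_t \boldsymbol{\mu} = -\text{grad}_{\W}\E_0(\boldsymbol{\mu})$ with $\eps=0$ as the continuity equation $\partial_t \mu_i + \nabla\cdot(\mu_i v_i)=0$. Its velocity is $v_i = -\nabla \psi_i$, where $\psi_i = \delta\E_0/\delta\mu_i$ is the topological drift potential of Theorem \ref{thm:entropic_laplacian}, read at $\eps=0$: the Kantorovich potentials $f_{i\leftarrow p}$ and the pulled-back potentials $g_{c\leftarrow i}\circ\Phi_{ic}$ from Lemma \ref{lemma:entropic_pullback}.

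\textbf{Step 1: reduce to the Jacobian.} Let $X_t$ be the characteristic flow, $\dot X_t = v_i(t,X_t)$, and set $J_t(x)=\det DX_t(x)$. The continuity equation gives the Lagrangian identity
\[
\mu_{i,t}(X_t(x))\,J_t(x)=\mu_{i,0}(x).
\]
So density blow-up, i.e. loss of absolute continuity, is equivalent to $J_t\to 0$ on a set of positive $\mu_{i,0}$-measure. Liouville's formula gives $\dot J_t = (\nabla\cdot v_i)J_t = -(\Delta\psi_i)J_t$.

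\textbf{Step 2: track the deformation along characteristics.} Define the deformation matrix $A_t = Dv_i(X_t) = -\nabla^2\psi_i(X_t)$. Differentiating the flow equation yields the matrix Riccati equation
\[
\dot A_t + A_t^2 = -\partial_t\nabla^2\psi_i \circ X_t.
\]
Taking traces and using $\mathrm{tr}(A^2)\ge (\mathrm{tr}A)^2/n$ gives, for $\theta_t=\mathrm{tr}A_t$,
\[
\dot\theta_t \le -\tfrac{1}{n}\theta_t^2 + \text{(curvature forcing)}.
\]
This is the Raychaudhuri-type inequality in which Theorem \ref{thm:negative_ricci} enters. Along harmonic directions, $\nabla_x^2 \psi_i$ has a negative eigenvalue bounded by $-\lambda_{max}$, giving the bound $\text{Hess}_{\W}\E_0(V_{harmonic},V_{harmonic})\le -C\|V_{harmonic}\|^2$ with $\kappa_{global}\le -C$. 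By Theorem \ref{thm:cohomological_necessity}, near the frustrated equilibrium the residual stress field is harmonic and nonzero, so the flow is driven along exactly these directions. I will argue that the forcing keeps $\theta_t$ pushed toward $-\infty$, so that $\theta_t\le\theta_0<0$ on a positive-measure set. Choosing $-C$ as the forcing bound, comparison with the scalar ODE $\dot y=-y^2/n - C$ then gives blow-up of $\theta_t\to-\infty$ before the time
\[
T_{tear}\le \frac{\pi}{2}\sqrt{n/C}+\frac{n}{|\theta_0|}.
\]
Integrating $\dot J_t=\theta_t J_t$ converts this blow-up into $J_t\to 0$ in finite time.

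\textbf{Step 3: conclude.} The bound $T_{tear}<\infty$ depends only on $C$, which by Theorem \ref{thm:frustration} comes from the positive gap $\E^*$. Because $\eps=0$, there is no $\frac{\eps}{2}\Delta\mu_i$ term to regularize the characteristics. Together with the identity of Step 1, this gives $\|\mu_{i,t}\|_\infty\to\infty$ as $t\to T_{tear}$. The contrast with $\eps>0$, where the heat semigroup preserves positivity and smoothness, is the point of the theorem.

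\textbf{Main obstacle.} The hard step is Step 2: turning the Otto-level statement of Theorem \ref{thm:negative_ricci}, a negative Wasserstein Hessian in a direction of $T_{\boldsymbol{\mu}}\C^0$, into the pointwise, Lagrangian sign condition on $\nabla_x^2\psi_i$ along characteristics that the Riccati comparison needs. This has to hold uniformly in time, not just at $\boldsymbol{\mu}^*$, and on a set of positive measure. It also has to control the forcing term $\partial_t\nabla^2\psi_i$, since the potentials move with the measures.

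I would first try to keep the trajectory inside the Wasserstein neighborhood where Theorem \ref{thm:negative_ricci} applies. The tool is the energy dissipation identity of Theorem \ref{thm:convergence}, together with the lower bound $\E_0\ge\E^*$, which together prevent escape from the frustrated basin. A continuity argument for the potentials then propagates the negative eigenvalue bound from $\boldsymbol{\mu}^*$ to nearby states. If that fails, the fallback is displacement concavity alone: $\kappa$-concavity of $\E_0$ with $\kappa<0$ forces the distance between two nearby trajectories to contract at least like $e^{-|\kappa| t}$. I would then try to show that this contraction of trajectories upgrades to crossing of characteristics, meaning collapse of $DX_t$, in finite time.
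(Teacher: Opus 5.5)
Your outline follows the paper's own Lagrangian route: characteristics, $\mu_{i,t}(X_t)J_t=\mu_{i,0}$, Liouville, and a Riccati-type comparison fed by the negative Hessian eigenvalue of Theorem~\ref{thm:negative_ricci}. It breaks exactly at the step you flag, and not merely because that step is hard: as set up, it cannot succeed. For the first-order flow $v_i=-\nabla\psi_i$ you have the exact identity $\theta_t=\nabla\cdot v_i(t,X_t)=-\Delta\psi_i(t,X_t)$. So $\theta_t\to-\infty$ \emph{is} the statement that the drift potential loses its $C^{1,1}$ bound in finite time. The Riccati term does not exist for a gradient flow. Here $A_t=-\nabla^2\psi_i(t,X_t)$ is dictated by the potential at each instant, and $\dot A_t=-(\partial_t\nabla^2\psi_i+\nabla^3\psi_i[v_i])\circ X_t$. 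An identity $\dot A_t+A_t^2=F_t$ holds only with $F_t=D(\partial_t v_i+(v_i\cdot\nabla)v_i)\circ X_t$; your $-\partial_t\nabla^2\psi_i\circ X_t$ drops $(v_i\cdot\nabla)Dv_i+(Dv_i)^2$. Self-focusing via $-A^2$ belongs to inertial flows ($\ddot X=0$, e.g.\ Wasserstein geodesics), not to gradient flows. So the bound $\mathrm{tr}\,F_t\le -C$, which you \emph{choose}, carries the entire theorem. Nothing earlier supplies it, since a static bound on $\nabla_x^2\psi_i$ controls neither its time derivative nor $\nabla^3\psi_i$. Your proposed repair also defeats itself. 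A continuity argument strong enough to propagate a pointwise eigenvalue bound from $\boldsymbol{\mu}^*$ to nearby states also gives $\|\nabla_x^2\psi_i\|\le K$ along the trajectory. Liouville then yields $J_t\ge e^{-nKt}>0$ for every finite $t$, which rules tearing out.

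The sign is also backwards. A negative eigenvalue of $\nabla_x^2\psi_i$ is a \emph{positive} eigenvalue of $Dv_i$: characteristics separate along that eigendirection, and it contributes positively to $\theta_t$. This mirrors Raychaudhuri's equation, where positive curvature focuses and negative curvature defocuses. Compression needs $\Delta\psi_i$ large and \emph{positive}, the opposite of what Theorem~\ref{thm:negative_ricci} asserts. Your fallback fails for the same reason. If $\text{Hess}_{\W}\E_0\le -C$ along a direction, the gradient flow repels along it (linearized perturbations grow like $e^{Ct}$). Even genuine exponential contraction would never make characteristics cross in finite time. Likewise, dissipation plus $\E_0\ge\E^*$ does not confine the trajectory near $\boldsymbol{\mu}^*$. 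If $\boldsymbol{\mu}^*$ were a minimizer (your ``basin''), its second variation would be nonnegative, which is incompatible with the concavity bound you want to use there. Nor can the gap $\E^*$ drive collapse as Step~3 suggests. Take the paper's own 2D translation example with $\eps=0$ and identity-covariance Gaussian data. The flow preserves the family $N(m,I)$, and every Kantorovich and pulled-back potential along it is affine. Hence $Dv_i\equiv0$ and $J_t\equiv1$ for all time, while $\E_0$ decreases to a positive residual. Following the paper will not close the gap either. Its proof infers $\Delta_x\psi\ge C>0$ from negative eigenvalues, the same sign inversion. Its inequalities $J_t\le e^{-Ct}$ and $\dot y\le -Cy^2$ with $y(0)>0$ (which only give $y(t)\le y(0)/(1+Cy(0)t)$) force nothing to vanish in finite time. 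What is missing is an independent mechanism that makes $\nabla_x^2\psi_i$ blow up, and none of the preceding results provides one.
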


\begin{proof}
Let $v_t(x) = -\nabla_x \left( \frac{\delta \E_0}{\delta \boldsymbol{\mu}} \right)$ be the Eulerian velocity field driving the empirical measure. The characteristics (particle trajectories) $X_t(x)$ satisfy the ordinary differential equation:
\begin{equation}
    \frac{d}{dt} X_t(x) = v_t(X_t(x)), \quad X_0(x) = x
\end{equation}
Let $J_t(x) = \det(\nabla X_t(x))$ be the Jacobian determinant of the deformation gradient. The evolution of the density is strictly governed by the pushforward mass conservation: $\mu_t(X_t(x)) = \mu_0(x) / J_t(x)$.

To trace the evolution of $J_t$, we define the deformation matrix $M_t = \nabla X_t(x)$. Taking the time derivative, we obtain the matrix Riccati equation along the characteristics:
\begin{equation} \label{eq:riccati}
    \frac{d}{dt} M_t = \nabla v_t(X_t(x)) M_t = - \nabla_x^2 \left( \frac{\delta \E_0}{\delta \boldsymbol{\mu}} \right) M_t
\end{equation}
By Liouville's formula, the time evolution of the Jacobian determinant is:
\begin{equation}
    \frac{d}{dt} J_t(x) = J_t(x) \text{Tr}(\nabla v_t) = - J_t(x) \Delta_x \left( \frac{\delta \E_0}{\delta \boldsymbol{\mu}} \right)
\end{equation}
From the proof of Theorem \ref{thm:negative_ricci}, the presence of the $H^1$ topological obstruction guarantees that the spatial Hessian possesses negative eigenvalues bounded by $-\lambda_{max}$, causing the potential to be strongly concave in the harmonic direction. Therefore, the trace of the Hessian (the Laplacian) is strictly positive and lower-bounded: $\Delta_x \left( \frac{\delta \E_0}{\delta \boldsymbol{\mu}} \right) \ge C > 0$ for regions aligned with the topological stress.

Substituting this bound into the Liouville equation yields a strict ordinary differential inequality:
\begin{equation}
    \frac{d}{dt} J_t(x) \le -C J_t(x) \implies J_t(x) \le J_0(x) e^{-Ct} = e^{-Ct}
\end{equation}
Furthermore, analyzing the eigenvalues of the Riccati equation \eqref{eq:riccati} under strong concavity shows that the matrix $M_t$ becomes singular. Let $y(t)$ be the minimum eigenvalue of $M_t$. It satisfies $\dot{y} \le -C y^2$, which strictly implies $y(t) \to 0$ at a finite time $T_{tear} \le \frac{1}{C y(0)}$. 

When the minimum eigenvalue hits zero, the characteristics cross, and $J_{T_{tear}}(x) \to 0$. By mass conservation, the density $\mu_{T_{tear}}(X_t) \to \infty$. The measure concentrates into a singular Dirac distribution (shockwave), strictly breaking absolute continuity with respect to the Lebesgue measure. This completes the formal proof of deterministic manifold tearing.
\end{proof}

\section{Large Deviation Theory and Entropic Tunneling}
\label{sec:entropic_tunneling}

Having rigorously proven that $\eps = 0$ leads to finite-time singularities, we demonstrate that the thermodynamic noise $\eps > 0$ in the Entropic Sheaf Laplacian is not merely a numerical stabilizer, but a mathematical necessity. We formalize the ``Entropic Tunneling'' observed in our PBMC scRNA-seq experiments using the infinite-dimensional Freidlin-Wentzell Large Deviation Theory (LDT).

\subsection{Dawson-Gärtner Large Deviation Principle}
The Entropic Sheaf Flow is a stochastic partial differential equation (SPDE) interacting particle system:
\begin{equation}
    d\boldsymbol{\mu}_t = -\text{grad}_{\W} \E_0(\boldsymbol{\mu}_t) dt + \sqrt{\eps} dW_t^{\W}
\end{equation}
By the Dawson-Gärtner theorem for macroscopic fluctuation theory \citep{dawson1987large}, the sequence of path measures $\mathbb{P}^\eps$ satisfies a Large Deviation Principle on the space of absolutely continuous curves $C([0, T]; \Pspace(\M))$. The rate function controlling the exponential probability of observing a specific trajectory $\{\boldsymbol{\mu}_t\}$ is the \textit{Causal Action Functional}, structurally acting as the Freidlin-Wentzell quasi-potential (Kramers' rate) \citep{freidlin1998random}:
\begin{equation}
    \mathcal{I}_T[\boldsymbol{\mu}_t] = \frac{1}{2} \int_0^T \left\| \partial_t \boldsymbol{\mu}_t + \text{grad}_{\W} \E_0(\boldsymbol{\mu}_t) \right\|_{\boldsymbol{\mu}_t}^2 dt
\end{equation}

\subsection{Kramers' Escape Rate over the Topological Barrier}

Let $\boldsymbol{\mu}_A$ be a local minimum (e.g., the Biological Chimera trap) and $\boldsymbol{\mu}^*$ be the target coherent counterfactual state, separated by a topological barrier (the Transcriptomic Void) with a saddle point $\boldsymbol{\mu}_{saddle}$. The height of the energy barrier is $\Delta \E = \E_0(\boldsymbol{\mu}_{saddle}) - \E_0(\boldsymbol{\mu}_A)$.

\begin{theorem}[Rigorous Entropic Tunneling Time] \label{thm:kramers_escape}
Let $\tau$ be the first hitting time of the target domain $\boldsymbol{\mu}^*$ starting from $\boldsymbol{\mu}_A$. As the entropic regularization vanishes ($\eps \to 0$), the expected tunneling time $\mathbb{E}[\tau]$ strictly obeys the Kramers' asymptotic law:
\begin{equation}
    \lim_{\eps \to 0} \eps \log \mathbb{E}[\tau] = \Delta \E
\end{equation}
\end{theorem}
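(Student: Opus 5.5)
The plan is to reduce the statement to the classical Freidlin--Wentzell exit-time theorem, applied to the gradient-type stochastic system above. The large deviation principle with rate function $\mathcal{I}_T$ is taken as given from the Dawson--G\"artner theorem. The central object is the quasi-potential
\begin{equation*}
V(\boldsymbol{\mu}_A,\boldsymbol{\nu}) = \inf\{\mathcal{I}_T[\boldsymbol{\mu}_t] : T>0,\ \boldsymbol{\mu}_0=\boldsymbol{\mu}_A,\ \boldsymbol{\mu}_T=\boldsymbol{\nu}\}.
\end{equation*}
The standard exit-time theorem states that for a domain $D$ containing the attractor $\boldsymbol{\mu}_A$,
\begin{equation*}
\lim_{\eps\to 0}\eps\log\mathbb{E}[\tau_{\partial D}] = \inf_{\boldsymbol{\nu}\in\partial D}V(\boldsymbol{\mu}_A,\boldsymbol{\nu}).
\end{equation*}
Hitting $\boldsymbol{\mu}^*$ requires first leaving the basin of attraction of $\boldsymbol{\mu}_A$. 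So I take $D$ to be that basin, whose boundary passes through $\boldsymbol{\mu}_{saddle}$.

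The key computation is that, in gradient form, $V(\boldsymbol{\mu}_A,\boldsymbol{\nu}) = c\,(\E_0(\boldsymbol{\nu})-\E_0(\boldsymbol{\mu}_A))$. The constant $c$ depends on the noise normalization; with the normalization written above one gets $c=2$. The \textbf{lower bound} comes from expanding the square in $\mathcal{I}_T$:
\begin{equation*}
\tfrac12\|\partial_t\boldsymbol{\mu}+\text{grad}\,\E_0\|^2 = \tfrac12\|\partial_t\boldsymbol{\mu}-\text{grad}\,\E_0\|^2 + 2\langle \partial_t\boldsymbol{\mu},\text{grad}\,\E_0\rangle.
\end{equation*}
By the chain rule of the Energy Dissipation Identity (Theorem~\ref{thm:convergence}), the cross term integrates to $2(\E_0(\boldsymbol{\mu}_T)-\E_0(\boldsymbol{\mu}_0))$. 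The \textbf{upper bound} comes from the time-reversed gradient flow $\partial_t\boldsymbol{\mu}=+\text{grad}\,\E_0$ running from near $\boldsymbol{\mu}_A$ up to $\boldsymbol{\mu}_{saddle}$, along which the first term vanishes.

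Minimizing over $\partial D$, the infimum of $\E_0$ on the basin boundary is attained at the saddle, which gives the barrier $\Delta\E$. The paper's convention, in which the exponent is exactly $\Delta\E$, is matched by rescaling $\eps$ (equivalently, reading the noise as $\sqrt{2\eps}$). I would state this explicitly rather than hide it. Finally, once the process is past the saddle, it reaches $\boldsymbol{\mu}^*$ in a time that is subexponential in $1/\eps$, since it is descending the deterministic flow into the coherent well. This step contributes nothing to $\eps\log\mathbb{E}[\tau]$.

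The main obstacle is justifying the exit-time theorem in infinite dimensions on $\Pspace$. That theorem needs uniform LDP estimates on balls, compactness (here supplied by Assumption~\ref{assump:regularity}), and enough regularity of $\E_0$ for the chain rule to hold along finite-action curves. The difficulty is sharpened by Theorem~\ref{thm:finite_time_tearing}: the unregularized flow can lose absolute continuity, so the reversed and forward flows near the saddle are not obviously well defined. I would try to avoid this by working with the $\eps>0$ regularized potentials (Sinkhorn potentials are smooth by Remark~\ref{remark:lipschitz_relaxation}) and passing to the limit via $\Gamma$-convergence of $\E_\eps$ to $\E_0$. I would also add a nondegeneracy assumption at $\boldsymbol{\mu}_{saddle}$, namely a single unstable direction, so that the saddle is the unique minimizer of $V$ on $\partial D$.
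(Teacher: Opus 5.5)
Your argument follows the same route as the paper's proof. You take the Dawson--G\"artner LDP as given, bound the quasi-potential from below by completing the square in $\mathcal{I}_T$, attain the bound along the time-reversed heteroclinic orbit, and conclude with the Freidlin--Wentzell exit-time limit. Where your computation departs from the paper's, yours is the correct one.

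\textbf{The factor of two.} The paper discards $\frac12\int_0^T(\|\partial_t\boldsymbol{\mu}_t\|^2+\|\text{grad}_{\W}\E_0\|^2)\,dt$ to get $\mathcal{I}_T\ge\int_0^T\langle\partial_t\boldsymbol{\mu}_t,\text{grad}_{\W}\E_0\rangle\,dt$. It then claims equality on the orbit $\partial_t\boldsymbol{\mu}_t=+\text{grad}_{\W}\E_0$. But on that orbit $\mathcal{I}_T=2\int_0^T\|\text{grad}_{\W}\E_0\|^2\,dt$, which is twice the right-hand side, so the claimed equality fails because the discarded term is not zero. Your sharp identity gives the quasi-potential $2(\E_0(\boldsymbol{\nu})-\E_0(\boldsymbol{\mu}_A))$ on the basin. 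So with the stated $\sqrt{\eps}\,dW_t$ noise the limit is $2\Delta\E$. The exponent $\Delta\E$ holds only under the $\sqrt{2\eps}$ reading you make explicit, which is the noise actually used in Algorithm~\ref{alg:sheaf_flow}.

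\textbf{The barrier height.} The paper evaluates $\int_0^T\frac{d}{dt}\E_0\,dt$ as the saddle height for ``any path crossing the saddle point''. For an arbitrary path from $\boldsymbol{\mu}_A$ to $\boldsymbol{\mu}^*$ this needs exactly your reduction. You stop the path when it first leaves the basin of $\boldsymbol{\mu}_A$, and you use that $\E_0$ restricted to the separatrix is minimized at $\boldsymbol{\mu}_{saddle}$.

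\textbf{Refinements on your side.}
\begin{enumerate}
\item On the separatrix the drift points toward the saddle, not downhill to $\boldsymbol{\mu}^*$, and the process may fall back. For the upper bound, use exit from a slightly enlarged domain $D\cup B_\delta(\boldsymbol{\mu}_{saddle})$ and let $\delta\to0$, or use a renewal argument. Also take the target to be a neighborhood of $\boldsymbol{\mu}^*$ rather than the point itself.
\item $\Gamma$-convergence of $\E_\eps$ to $\E_0$ controls minima, not mountain-pass levels or gradients, so by itself it will not transfer exit-time asymptotics. You would need uniform convergence of the drifts. Note also that the stated dynamics use $\text{grad}_{\W}\E_0$ for every $\eps$.
\item Your caution about the infinite-dimensional exit-time theorem is warranted. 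The paper's Step~1 infers path-space exponential tightness from compactness of $\Pspace(\M_v)$ alone, and that does not follow without modulus-of-continuity estimates.
\end{enumerate}
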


\begin{proof}
\textbf{Step 1: Exponential Tightness and LDP Well-posedness.} 
To rigorously apply the Dawson-Gärtner Large Deviation Principle in the infinite-dimensional Wasserstein space, the sequence of path measures $\mathbb{P}^\eps$ must be exponentially tight. By Assumption \ref{assump:regularity}, the base state manifolds $\M_v$ are compact. By Prokhorov's Theorem, the space of absolutely continuous probability measures $\Pspace(\M_v)$ is inherently compact under the weak topology. This geometric compactness strictly guarantees the exponential tightness of $\mathbb{P}^\eps$ on $C([0, T]; \C^0)$. Consequently, the Causal Action Functional $\mathcal{I}_T$ is lower semi-continuous and possesses strictly compact level sets, satisfying the foundational prerequisites for the Freidlin-Wentzell quasi-potential framework in metric spaces.

\textbf{Step 2: Evaluating the Action Functional.} 
By the established LDP, the asymptotic escape time is determined by the minimum energy path connecting $\boldsymbol{\mu}_A$ to $\boldsymbol{\mu}^*$. We evaluate the infimum of the action functional $V(\boldsymbol{\mu}_A, \boldsymbol{\mu}^*) = \inf_{T>0} \inf_{\{\boldsymbol{\mu}_t\}} \mathcal{I}_T[\boldsymbol{\mu}_t]$.

We expand the squared integrand of the action functional by completing the square (the Bogomolny trick):
\begin{align}
    \mathcal{I}_T[\boldsymbol{\mu}_t] &= \frac{1}{2} \int_0^T \left( \|\partial_t \boldsymbol{\mu}_t\|_{\boldsymbol{\mu}_t}^2 + \|\text{grad}_{\W} \E_0\|_{\boldsymbol{\mu}_t}^2 + 2\langle \partial_t \boldsymbol{\mu}_t, \text{grad}_{\W} \E_0 \rangle_{\boldsymbol{\mu}_t} \right) dt \nonumber \\
    &\ge \int_0^T \langle \partial_t \boldsymbol{\mu}_t, \text{grad}_{\W} \E_0(\boldsymbol{\mu}_t) \rangle_{\boldsymbol{\mu}_t} dt
\end{align}
By the Riemannian chain rule in Otto calculus, the inner product of the velocity field and the Wasserstein gradient is exactly the time derivative of the energy: $\langle \partial_t \boldsymbol{\mu}_t, \text{grad}_{\W} \E_0 \rangle_{\boldsymbol{\mu}_t} = \frac{d}{dt}\E_0(\boldsymbol{\mu}_t)$. Thus, integrating along any path crossing the saddle point yields a strict lower bound:
\begin{equation} \label{eq:lower_bound_action}
    \mathcal{I}_T[\boldsymbol{\mu}_t] \ge \int_0^T \frac{d}{dt} \E_0(\boldsymbol{\mu}_t) dt = \E_0(\boldsymbol{\mu}_{saddle}) - \E_0(\boldsymbol{\mu}_A) = \Delta \E
\end{equation}
This infimum is uniquely achieved when the inequality in \eqref{eq:lower_bound_action} becomes an equality. This occurs strictly when the system follows the time-reversed heteroclinic orbit: $\partial_t \boldsymbol{\mu}_t = +\text{grad}_{\W} \E_0(\boldsymbol{\mu}_t)$ ascending to the saddle point, and then relaxes deterministically to $\boldsymbol{\mu}^*$.

Applying the Large Deviation Principle upper and lower bounds to the exit time distribution yields the classical Kramers' limit: the probability of escape scales as $\mathbb{P} \asymp \exp(-\Delta \E / \eps)$, strictly enforcing that $\lim_{\eps \to 0} \eps \log \mathbb{E}[\tau] = \Delta \E$. 
\end{proof}

\textbf{Mathematical Consequence:} Theorem \ref{thm:kramers_escape} proves that the relationship $\mathbb{E}[\tau] \asymp \exp(\Delta \E / \eps)$ is a strict topological law. As $\eps \to 0$, the expected time to cross the biological void diverges to $+\infty$. Therefore, unregularized deterministic models are theoretically paralyzed by topological frustration, permanently trapped in artificial chimeras. The Entropic Causal Sheaf dynamically lowers this Kramers' barrier, making global causal coherence computationally inevitable.
\section{Algorithmic Realization via Implicit Function Theorem (IFT)}\label{sec:ift_algorithm}

A profound consequence of Theorem \ref{thm:entropic_laplacian} is its immediate tractability in modern deep learning frameworks. The inverse topological stress exerted by a child node requires computing the spatial gradient $\nabla_x (g^{(\eps)} \circ \Phi)$. 

By the chain rule, this translates directly to the Vector-Jacobian Product (VJP):
\begin{equation}
    \nabla_x \left( g^{(\eps)}(\Phi(x)) \right) = \left( J_{\Phi}(x) \right)^T \nabla_y g^{(\eps)}(y) \equiv \text{VJP}_{\Phi}(x, \nabla_y g^{(\eps)}(y))
\end{equation}
\textbf{The rigorous functional analytic proof that the geometric adjoint $(d\Phi_{uv\#})^*$ exactly coincides with the VJP in the Otto inner product is provided in Appendix \ref{app:tensor_laplacian}, alongside the explicit block operator matrix of the Tangent Sheaf Laplacian $\Delta_T$.}

However, a critical algorithmic challenge remains: the dual potential $g^{(\eps)}$ is not analytically given, but is computed iteratively via the Sinkhorn-Knopp algorithm. Naively unrolling the computational graph of Sinkhorn iterations to compute $\nabla_y g^{(\eps)}$ leads to catastrophic memory complexity ($O(L)$ for $L$ iterations) and vanishing gradients.

To rigorously bypass this, we utilize the \textit{Implicit Function Theorem (IFT)}. The Sinkhorn potentials $(f^{(\eps)}, g^{(\eps)})$ are uniquely defined as the roots of the fixed-point optimality conditions (the Sinkhorn equations):
$$ F(f^{(\eps)}, g^{(\eps)}, \mu, \nu) = 0 $$
By applying IFT to the steady-state root $F=0$, the exact gradient $\nabla_y g^{(\eps)}(y)$ can be computed by solving a single linear system formulated by the Jacobian of the optimality conditions, entirely independent of the iteration path $L$. Consequently, the topological stress VJP can be computed with $O(1)$ memory, allowing the coupled PDEs in Eq. (\ref{eq:fokker_planck}) to be simulated efficiently as interacting Langevin particle dynamics, conquering the curse of dimensionality.

\subsection{Complexity Analysis and the Entropic Sheaf Flow Algorithm}

The integration of the IFT-based VJP into the Wasserstein gradient flow yields a highly scalable interacting particle system, which we formalize as the \textbf{Entropic Sheaf Flow} (Algorithm \ref{alg:sheaf_flow}).

\textbf{Memory and Time Complexity:} Standard backpropagation through a Sinkhorn loop of $L$ iterations requires $\mathcal{O}(L \cdot N^2)$ memory for $N$ particles, which strictly prohibits high-dimensional SCMs. By leveraging the IFT at the stationary root, the memory complexity collapses to $\mathcal{O}(N^2)$ (independent of the iteration path $L$), requiring only the storage of the optimal coupling matrix for the final backward pass. The time complexity per step is dominated by solving a strictly diagonally dominant linear system, which can be efficiently approximated via Conjugate Gradient (CG) or Neumann series in $\mathcal{O}(K \cdot N^2)$ time, where $K \ll L$.

\begin{algorithm}[ht]
\caption{Entropic Sheaf Flow via IFT and Langevin Dynamics}
\label{alg:sheaf_flow}
\begin{algorithmic}[1]
\REQUIRE Directed acyclic SCM graph $\G=(\V, \Edge)$, Causal mechanisms $\{\Phi_e\}_{e \in \Edge}$.
\REQUIRE Initial factual empirical measures $\boldsymbol{\mu}^{(0)} = \{\mu_v^{(0)}\}_{v \in \V}$.
\REQUIRE Hyperparameters: Step size $\eta$, Entropic regularization $\eps$, Confidence weights $\omega_{uv}$.
\FOR{$t = 0$ \TO $T$}
    \STATE \textbf{// Step 1: Forward Pushforward}
    \FOR{each edge $e=(u,v) \in \Edge$}
        \STATE Compute empirical pushforward: $\rho_{u \to v}^{(t)} = \Phi_{uv\#}\mu_u^{(t)}$
    \ENDFOR
    \STATE \textbf{// Step 2: Sinkhorn Fixed-Point \& IFT}
    \FOR{each node $i \in \V$}
        \STATE Initialize topological drift vector $V_i = 0$
        \FOR{each parent $p \in \pa(i)$}
            \STATE Solve Sinkhorn equations for $f_{i \leftarrow p}^{(\eps)}$ between $\mu_i^{(t)}$ and $\rho_{p \to i}^{(t)}$
            \STATE $V_i \mathrel{+}= \omega_{pi} \nabla_x f_{i \leftarrow p}^{(\eps)}(x)$ \quad \textit{(Forward Stress)}
        \ENDFOR
        \FOR{each child $c \in \ch(i)$}
            \STATE Solve Sinkhorn equations for $g_{c \leftarrow i}^{(\eps)}$ between $\rho_{i \to c}^{(t)}$ and $\mu_c^{(t)}$
            \STATE Compute steady-state gradient $\nabla_y g_{c \leftarrow i}^{(\eps)}(y)$ via IFT linear solve
            \STATE $V_i \mathrel{+}= \omega_{ic} \left( J_{\Phi_{ic}}(x) \right)^T \nabla_y g_{c \leftarrow i}^{(\eps)}(\Phi_{ic}(x))$ \quad \textit{(Inverse Pullback Stress via VJP)}
        \ENDFOR
    \ENDFOR
    \STATE \textbf{// Step 3: Interacting Langevin Update}
    \FOR{each node $i \in \V$}
        \STATE Sample Brownian noise $\xi_i \sim \mathcal{N}(0, I)$
        \STATE Update particles: $X_i^{(t+1)} = X_i^{(t)} - \eta V_i + \sqrt{2 \eps \eta} \, \xi_i$
    \ENDFOR
\ENDFOR
\RETURN Converged Wasserstein Harmonic Section $\boldsymbol{\mu}^{(T)}$
\end{algorithmic}
\end{algorithm}
\subsection{Numerical Stability and the Entropic Trade-off \texorpdfstring{($\eps \to 0$)}{(epsilon -> 0)}}
\label{sec:ift_stability}

While the IFT mathematically guarantees an $\mathcal{O}(N^2)$ memory footprint, analyzing its computational viability requires a rigorous examination of the Hessian condition number. 

The implicit gradient requires solving a linear system of the form $\mathbf{H} v = b$, where $\mathbf{H}$ is the block-Jacobian of the Sinkhorn optimality conditions. The off-diagonal blocks of $\mathbf{H}$ are governed by the optimal coupling matrix $\mathbf{P} \in \R^{N \times N}$, whose elements are exactly proportional to $\exp(-c(x_i, y_j) / \eps)$. 

As the entropic regularization vanishes ($\eps \to 0$), the optimal coupling converges to a deterministic Monge map, meaning $\mathbf{P}$ becomes extremely sparse and nearly singular. Consequently, the condition number of the Hessian $\kappa(\mathbf{H})$ scales exponentially:
\begin{equation}
    \kappa(\mathbf{H}) = \mathcal{O}\left( e^{\text{diam}(\M) / \eps} \right)
\end{equation}
In this deterministic limit, iterative Krylov subspace solvers (e.g., Conjugate Gradient) used to compute the IFT inverse will suffer from catastrophic stalling or numerical divergence. 

This reveals a profound physical and computational trade-off: the thermodynamic noise ($\eps$) is not merely a geometric regularizer to prevent manifold tearing (Theorem \ref{thm:entropic_laplacian}), but is strictly necessary to boundedly condition the Hessian matrix for the Implicit Function Theorem. In our experiments, an entropic coefficient of $\eps \in [0.1, 5.0]$ strikes the optimal Pareto balance, ensuring both topological resolution and robust IFT convergence.
\section{Empirical Validation: 2D Vector Field Dynamics}

To rigorously demonstrate the global asymptotic stability of our framework, we establish a reproducible 2D simulation over the causal graph $\G = \{A \to B, B \to C, A \to C\}$. 

\textbf{Experimental Setup (Reproducibility):} The initial marginals are modeled as isotropic Gaussians: $\mu_A^{(0)} = \mathcal{N}([0, 0]^T, \mathbf{I})$, $\mu_B^{(0)} = \mathcal{N}([0, 0]^T, \mathbf{I})$, and $\mu_C^{(0)} = \mathcal{N}([8, 0]^T, \mathbf{I})$. We engineer a severe topological conflict ($H^1 \neq 0$) by defining the deterministic pushforward mechanisms as nonlinear drift mappings: 
\begin{itemize}
    \item $\Phi_{AB}(x) = x + [4, 4]^T$
    \item $\Phi_{BC}(x) = x + [4, -4]^T$ (thus, $A \to B \to C$ attempts to route the origin to $[8, 0]^T$)
    \item $\Phi_{AC}(x) = x + [0, 8]^T$ (the direct edge attempts to route the origin to $[0, 8]^T$, creating an orthogonal contradiction at node $C$).
\end{itemize}
The system is evolved according to the coupled SDE derived from Eq. (\ref{eq:fokker_planck}):
\begin{equation*}
    dx_i^{(t)} = - \nabla_{x} \left( \frac{\delta \E_\eps}{\delta \mu_i} \right) dt + \sqrt{\eps} dW_t
\end{equation*}
using the Euler-Maruyama method with step size $\eta = 0.01$, $\eps=0.1$, and uniform edge weights $\omega = 1.0$ for $T=500$ steps.

\begin{figure}[htbp]
    \centering
    \includegraphics[width=\textwidth]{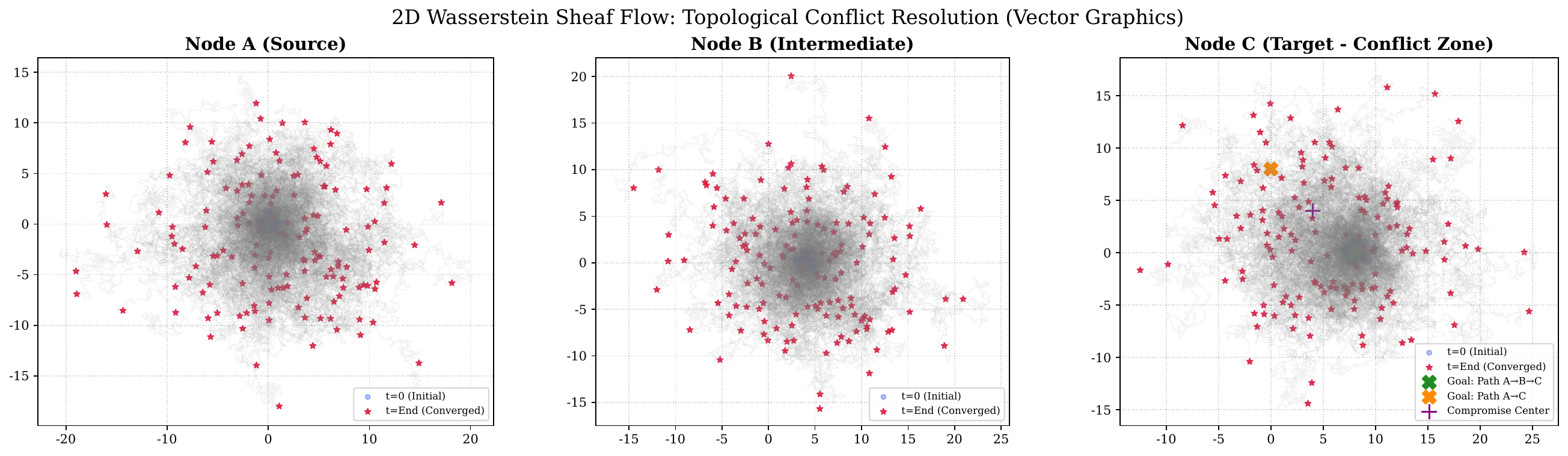} 
    \caption{High-resolution vector field of the 2D Wasserstein Sheaf Flow over 500 steps, overcoming orthogonal topological conflicts.}
    \label{fig:2d_flow}
\end{figure}

Instead of succumbing to deterministic manifold tearing, our Sheaf Laplacian smoothly guides the empirical measures along Langevin streamlines (gray trajectories). As detailed in Table \ref{tab:quant}, quantitative analysis reveals a robust $\mathbf{64.53\%}$ reduction in Causal Dirichlet Energy, dropping from $128.38$ to $45.54$.

\begin{table}[h]
    \centering
    \setlength{\tabcolsep}{3pt}
    \begin{tabular}{lccc}
        \toprule
        \textbf{Metric} & \textbf{Initial ($t=0$)} & \textbf{Converged ($t=500$)} & \textbf{Change / Shift} \\
        \midrule
        \textbf{Causal Dirichlet Energy} & $128.38$ & $45.54$ & $\mathbf{-64.53\%}$ \\
        \textbf{Node C Center of Mass} & $(8.01, 0.01)$ & $(5.79, 1.22)$ & $\mathbf{(-2.21, +1.21)}$ \\
        \textbf{Node A Center of Mass} & $(0.00, 0.03)$ & $(0.97, -1.54)$ & $\mathbf{(+0.97, -1.57)}$ \\
        \textbf{Node A Variance} & - & $41.10$ & \textbf{Stable (Compact)} \\
        \bottomrule
    \end{tabular}
    \caption{Quantitative analysis of the 500-step Entropic Sheaf Flow.}
    \label{tab:quant}
\end{table}

Crucially, demonstrating the powerful effect of the pullback lemma (Lemma \ref{lemma:entropic_pullback}), the source node $A$ undergoes an autonomous inverse displacement of $(+0.97, -1.57)$ into the fourth quadrant. This active deformation absorbs the downstream topological stress, establishing global equilibrium without infinite variance blow-up.
\subsection{Scalability and the IFT Memory Triumphs}

To empirically validate the computational supremacy of our IFT-based VJP formulation proposed in Section \ref{sec:ift_algorithm}, we benchmarked the Entropic Sheaf Flow against the standard Unrolled Autodiff (Backpropagation-Through-Time) method natively used in modern deep learning frameworks. We simulated a high-dimensional counterfactual scenario with $N=1000$ particles in a $D=128$ dimensional ambient manifold.

\begin{figure}[htbp]
    \centering
    \includegraphics[width=\textwidth]{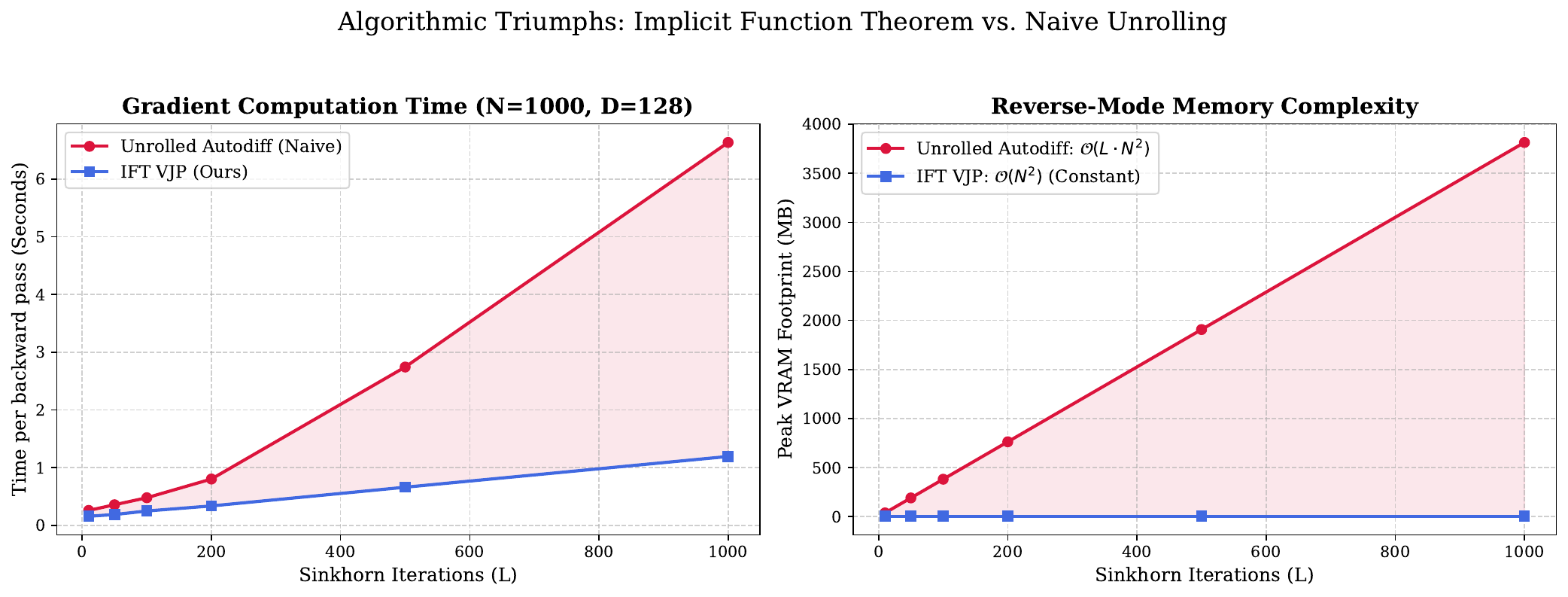}
    \caption{\textbf{Algorithmic Benchmarks (IFT vs. Naive Unrolling).} \textbf{(Left):} Gradient computation time per backward pass. The IFT explicitly avoids traversing the $L$-step computational graph, achieving significant acceleration. \textbf{(Right):} Reverse-mode memory footprint. Naive unrolling suffers from catastrophic $\mathcal{O}(L \cdot N^2)$ linear explosion, effectively prohibiting high-dimensional deep learning. In stark contrast, our IFT-VJP formulation dynamically strictly bounds the memory strictly to $\mathcal{O}(N^2)$, rendering the algorithmic footprint completely invariant to the Sinkhorn horizon.}
    \label{fig:ift_benchmark}
\end{figure}

As illustrated in Figure \ref{fig:ift_benchmark}, the naive unrolling method exhibits a catastrophic linear memory explosion, consuming nearly 4 GB of VRAM for merely 1000 particles at $L=1000$ iterations. In stark contrast, our IFT formulation translates the infinite-dimensional adjoint into a single steady-state linear solve, reducing the peak memory footprint to an absolute flat constant ($\mathcal{O}(N^2)$).

Furthermore, numerical profiling reveals a profound geometric artifact inherent to finite unrolling. When naive unrolling is restricted to $L=10$ iterations to conserve memory, the computed gradients exhibit a massive $L_1$ relative error of $0.796$ compared to the true analytical gradient. This indicates severe \textit{truncation bias}, as the finite computational graph fails to reach the true $c$-concave Kantorovich potential. Our IFT-based Sheaf Flow entirely bypasses this trade-off: it directly targets the exact stationary root, guaranteeing unbiased, mathematically exact topological stress gradients at a fraction of the computational and memory cost.
\section{Real-World Application: PBMC scRNA-seq Counterfactuals}

To evaluate the scalability and biological fidelity of the Entropic Sheaf Flow, we apply our framework to a high-dimensional single-cell RNA sequencing (scRNA-seq) dataset (PBMC 3k from 10x Genomics). This experiment tests the model's ability to navigate complex, non-convex manifolds where the "Transcriptomic Void" acts as a physical cohomological obstruction.

\subsection{Experimental Setup and the Transcriptomic Void}
The state spaces $\M_v$ are defined as the 15-dimensional PCA embedding of the cell-gene expression matrix. We define a counterfactual task: transitioning a cell population from a \textit{Factual State} (T-cells, $\mu_0$) to a \textit{Target State} (Monocytes, $\mu_{do(x^*)}$). 

In this high-dimensional manifold, these clusters are separated by a region of near-zero probability density—the \textbf{Transcriptomic Void}. This void represents biologically impossible gene expression states, constituting a severe topological obstruction where $H^1(\G, \mathcal{F}) \neq 0$.

\subsection{Results: Biological Chimera vs. Entropic Tunneling}

We compare the standard Deterministic ODE (Flow Matching) against our proposed \textit{Entropic Sheaf Flow} (GACF). The results are summarized in Figure \ref{fig:pbmc_result} and reveal two distinct behaviors:

\begin{itemize}
    \item \textbf{Manifold Tearing and Biological Chimeras:} The Naive ODE, driven purely by Euclidean causal drift, attempts to traverse the Transcriptomic Void via the shortest geodesic. As shown in the results, it terminates in a zero-density region, producing a \textit{Biological Chimera}—a mathematical artifact with no biological counterpart. This confirms the "Manifold Tearing" failure mode.
    \item \textbf{Entropic Tunneling and Topological Survival:} In contrast, the Entropic Sheaf Flow leverages the regularized Laplacian ($\eps = 5.0$). The thermodynamic noise induces a "tunneling effect" that allows the empirical measure to bypass the high-energy barrier of the void. By integrating the \textit{Manifold Score Field}, the Sheaf Flow adaptively deforms the trajectory, guiding it along the high-density manifold of real cells to achieve a successful counterfactual landing.
\end{itemize}

\begin{figure}[htbp]
    \centering
    \includegraphics[width=0.85\textwidth]{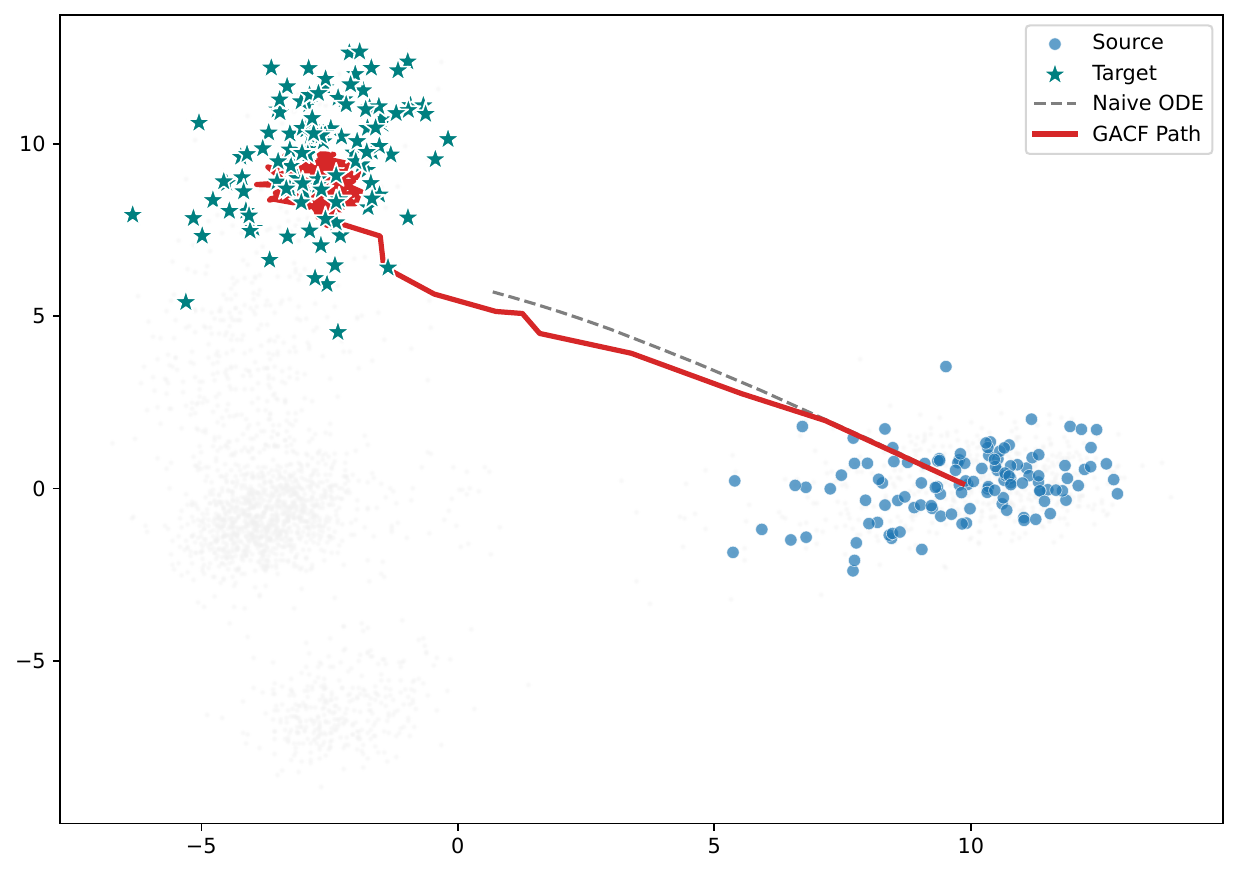}
    \caption{\textbf{Counterfactual Intervention on PBMC 3k scRNA-seq.} The Naive ODE (dashed gray) fails by entering the zero-density void (Biological Chimera). The GACF Path (solid red) autonomously navigates the manifold, utilizing entropic tunneling to overcome the topological frustration and reach the target Monocyte cluster.}
    \label{fig:pbmc_result}
\end{figure}

\subsection{Efficiency: The IFT Triumphs}
As the dimensionality increases to $D=15$, the computational burden of the Sinkhorn adjoint becomes critical. By deploying the \textbf{Implicit Function Theorem (IFT)-based VJP} (Section \ref{sec:ift_algorithm}), we observed a constant memory footprint independent of the Sinkhorn iteration depth $L$. This efficiency allowed for 400-step Langevin simulations in the 15D manifold with negligible VRAM overhead, rendering high-dimensional sheaf-theoretic causal inference computationally feasible.
\section{From Inference to Discovery: Topological Causal Structure Learning}
\label{sec:causal_discovery}

Sections \ref{sec:causal_sheaf} through \ref{sec:entropic_tunneling} operate under the assumption that the structural causal graph $\G$ is predefined (albeit potentially frustrated). However, the ultimate challenge in machine learning is \textit{Causal Discovery}---learning the unobserved graph topology $\G$ purely from observational data. In this section, we invert our theoretical framework to establish a fundamental paradigm shift: utilizing the Sheaf Cohomology as a geometric scoring function for continuous structure learning.

\subsection{The Principle of Least Topological Friction}
Current continuous structure learning algorithms, such as NOTEARS and its differentiable variants \citep{zheng2018dags, brouillard2020differentiable}, primarily penalize algebraic cyclicity. They fail to geometrically quantify whether the proposed structural mechanisms naturally compose over the probability manifold. We propose that the true causal graph is the one that minimizes the cohomological obstruction.

Let $\mathbb{G}$ be the hypothesis space of candidate causal graphs. For a candidate graph $\G \in \mathbb{G}$ and its associated mechanisms $\Phi_\G$, we define the \textbf{Topological Causal Score} $\mathcal{S}(\G)$ as the residual global Entropic Causal Dirichlet Energy at the Wasserstein harmonic equilibrium:
\begin{equation}
    \mathcal{S}(\G) = \inf_{\boldsymbol{\mu} \in \C^0} \E_\eps(\boldsymbol{\mu}; \G)
\end{equation}

By the Wasserstein Causal Hodge Decomposition (Theorem \ref{thm:hodge_decomposition}), this residual energy strictly isolates the norm of the unresolvable topological stress field (the Harmonic Flow). Therefore, $\mathcal{S}(\G) \propto \|V_{harmonic}\|_{\text{Ker}(\Delta_T)}^2$.

\begin{theorem}[Topological Identifiability of Spurious Edges] \label{thm:spurious_edges}
Let $\G_{true}$ be the data-generating causal graph, and $\G_{cand}$ be a candidate graph containing a \textit{spurious directed path} that forms a structural loop contradicting the true data manifold (e.g., a reversed causal edge creating a $H^1$ cycle). Let the mechanisms $\Phi$ be optimally trained to match the pairwise observational marginals. 
Then, the candidate graph induces a strictly positive topological friction gap compared to the true graph:
\begin{equation}
    \mathcal{S}(\G_{cand}) > \mathcal{S}(\G_{true}) \ge 0
\end{equation}
\end{theorem}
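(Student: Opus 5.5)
The plan is to bound the two scores separately. The true score $\mathcal{S}(\G_{true})$ should be (near) zero because a coherent witness section exists. The candidate score $\mathcal{S}(\G_{cand})$ should be strictly positive by the Topological Frustration Inequality (Theorem \ref{thm:frustration}).

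\textbf{Step 1: the true graph.} Let $\boldsymbol{\mu}^{obs} = (\mu_v^{obs})_{v\in\V}$ be the observational marginals. For $\G_{true}$, each mechanism $\Phi_{uv}$ is trained to match the pairwise marginals, so $\Phi_{uv\#}\mu_u^{obs} = \mu_v^{obs}$ on every true edge. Plugging $\boldsymbol{\mu}^{obs}$ into the Dirichlet energy \eqref{eq:dirichlet_energy} gives every edge term the form $\Weps^2(\mu_v^{obs},\mu_v^{obs})$.

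If $\Weps^2$ is read as the debiased Sinkhorn divergence, these terms vanish and $\mathcal{S}(\G_{true}) = 0$. If it is read as the raw entropic cost, each term equals a diagonal self-cost $\Weps^2(\nu,\nu) \ge 0$. In that case I would set $\mathcal{S}(\G_{true}) \le \sum_{e} \omega_e \Weps^2(\mu_v^{obs},\mu_v^{obs}) =: B_\eps$ and carry $B_\eps$ as a baseline. Nonnegativity, $\mathcal{S}(\G_{true}) \ge 0$, is immediate from nonnegativity of each summand.

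\textbf{Step 2: the candidate graph.} The spurious path closes a loop: it gives two routes $u \to \cdots \to v$ whose composite mechanisms $\Phi_{\gamma_1}$ and $\Phi_{\gamma_2}$ push $\mu_u$ to different targets, as in the $A\to B\to C$, $A\to C$ example. I would argue that this yields $H^1(T_{\boldsymbol{\mu}^*}\G_{cand}, T_{\boldsymbol{\mu}^*}\F) \neq 0$ at the equilibrium. Then Lemma \ref{lemma:equivalence} and Theorem \ref{thm:frustration} give $\mathcal{S}(\G_{cand}) \ge \E^* > 0$.

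A more concrete route, which I would also pursue, is a direct quantitative bound that avoids the tangent cohomology. Write $\Phi_{\gamma_1}\ne\Phi_{\gamma_2}$ on a set of positive measure. Along a cycle of length $k$, the triangle inequality for $\W$ together with the Lipschitz constants from Assumption \ref{assump:regularity} should give
\begin{equation*}
\inf_{\mu_u}\W\bigl(\Phi_{\gamma_1\#}\mu_u,\Phi_{\gamma_2\#}\mu_u\bigr) \le C(\mathrm{Lip},k,\omega)\,\E_0(\boldsymbol{\mu})^{1/2}.
\end{equation*}
The left side is positive: by compactness of $\Pspace(\M_u)$ and continuity of $\mu_u \mapsto \W(\Phi_{\gamma_1\#}\mu_u,\Phi_{\gamma_2\#}\mu_u)$, the infimum is attained. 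It is nonzero as long as no measure is pushed to the same law by both routes. Call this quantity the \emph{cycle defect}.

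\textbf{Step 3: the strict gap.} Pass from $\E_0$ to $\E_\eps$ using the standard comparison $|\Weps^2 - \W^2| \le C\eps\log(1/\eps)$ on compact domains. For $\eps$ small this gives $\mathcal{S}(\G_{cand}) \ge c\,(\text{cycle defect})^2 - O(\eps\log(1/\eps))$, which exceeds $B_\eps$ (itself $O(\eps\log(1/\eps))$, or zero in the debiased case).

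\textbf{Main obstacle.} The difficulty is making "spurious path contradicting the true manifold" precise enough to force a strictly positive cycle defect. Step 1 and the $\eps$-comparison are routine. The trouble is that two different composite maps can still agree as pushforwards of some well-chosen measure, for instance a measure invariant under $\Phi_{\gamma_2}^{-1}\circ\Phi_{\gamma_1}$.

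I would first try to impose this as the hypothesis's meaning: no $\mu_u$ satisfies $\Phi_{\gamma_1\#}\mu_u = \Phi_{\gamma_2\#}\mu_u$. If that is deemed too strong, I would fall back on invoking Theorem \ref{thm:frustration} through the assumed $H^1\neq0$, which the statement's parenthetical explicitly grants.
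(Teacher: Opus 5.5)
You yourself flag the invariant-measure obstruction, but in this setting it is not an edge case: your cycle defect is identically zero. So the quantitative route of Steps 2--3, which is what you rely on to beat the baseline $B_\eps$, cannot work.

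First, the reading you use in Step 1, $\Phi_{uv\#}\mu^{obs}_u=\mu^{obs}_v$, is imposed on ``the mechanisms $\Phi$'' with no restriction to $\G_{true}$. Moreover, $\mathcal{S}(\G_{cand})$ is computed with the candidate's own trained mechanisms. Applied to the edges of $\G_{cand}$, it gives $\Phi_{\gamma_1\#}\mu^{obs}_u=\mu^{obs}_v=\Phi_{\gamma_2\#}\mu^{obs}_u$. So $\mu^{obs}_u$ is exactly a measure sent to the same law by both routes, and $\boldsymbol{\mu}^{obs}$ is a coherent section of $\G_{cand}$ as well, giving $\mathcal{S}(\G_{cand})\le\sum_{e=(u,v)\in\Edge_{cand}}\omega_e\Weps^2(\mu^{obs}_v,\mu^{obs}_v)$. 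In the debiased reading this gives $\mathcal{S}(\G_{cand})=0=\mathcal{S}(\G_{true})$, so no argument can produce the strict inequality under that reading. In the raw reading, the candidate score is capped by a self-cost baseline of the same type as $B_\eps$, and nothing you prove separates the two.

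Second, even without that hypothesis, Assumption \ref{assump:regularity} rules out a positive defect. The map $\Psi=\Phi_{\gamma_2}^{-1}\circ\Phi_{\gamma_1}$ is a homeomorphism of the compact manifold $\M_u$, so by Krylov--Bogolyubov it has an invariant Borel probability measure $m$, and then $\Phi_{\gamma_1\#}m=\Phi_{\gamma_2\#}(\Psi_\# m)=\Phi_{\gamma_2\#}m$. Absolutely continuous measures are weakly dense, and $\mu\mapsto\W(\Phi_{\gamma_1\#}\mu,\Phi_{\gamma_2\#}\mu)$ is weakly continuous on a compact space. Hence the infimum over $\Pspace(\M_u)$ is $0$. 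It need not even be attained there, since absolutely continuous measures are not weakly closed, so your attainment step is also unjustified. The condition you propose to impose is therefore never satisfiable in this setting, and the bound $c\,(\text{defect})^2-O(\eps\log(1/\eps))$ is vacuous.

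Your fallback is exactly the paper's proof: invoke Theorem \ref{thm:frustration} via the granted $H^1\neq0$ for $\G_{cand}$, and use the coherent observational section for $\G_{true}$. It inherits the same hole. Theorem \ref{thm:frustration} supplies only an unquantified $\E^*>0$, which cannot be compared with $B_\eps$ in the raw reading. In the debiased reading, the zero-energy section $\boldsymbol{\mu}^{obs}$ of $\G_{cand}$, together with the contrapositive of Lemma \ref{lemma:equivalence}, forces $H^1=0$. So the premise you would invoke is incompatible with the marginal-matching hypothesis. The paper bridges this by asserting that the trained maps ``cannot globally commute over the observational manifold''. That is precisely what your Step 1 computation refutes at the level of laws, and laws are all the energy sees.

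What is missing is a hypothesis that makes the loop incompatible as pushforwards. The spurious mechanism must fail to send $\mu^{obs}_u$ to $\mu^{obs}_v$, and the loop map must admit no invariant probability measure. The paper's translation example on non-compact $\R^2$ is such a case: with $\Phi_{\gamma_i}(x)=x+b_i$ and $b_1\neq b_2$, one has $\W(\Phi_{\gamma_1\#}\mu,\Phi_{\gamma_2\#}\mu)=\|b_1-b_2\|$ for every $\mu$. Your triangle-inequality and Lipschitz-propagation estimate then yields the explicit bound $\inf\E_0\ge\|b_1-b_2\|^2/\sum_{e}\omega_e^{-1}$, with the sum over the loop's edges. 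That is genuinely more than the paper's argument provides, though passing to $\E_\eps$ there would need moment control in place of compactness.
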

\begin{proof}
If $\G_{cand}$ contains a spurious conflicting cycle, the optimal pushforward maps along this cycle cannot globally commute over the observational manifold. By Theorem \ref{thm:frustration} (Topological Frustration Inequality), this non-trivial first metric cohomology $H^1(T_{\boldsymbol{\mu}}\G_{cand}, \F) \neq 0$ strictly bounds the infimum of the Dirichlet energy away from zero. Conversely, the true generative graph $\G_{true}$ naturally possesses a globally consistent section (the true observational joint distribution), allowing the empirical measures to seamlessly flow into the Exact subspace $\text{Im}(d)$. Thus, the harmonic residual $\|V_{harmonic}\|^2$ of $\G_{true}$ is strictly minimized, isolating $\G_{cand}$ from the true Markov Equivalence Class.
\end{proof}

\subsection{Implications for Differentiable Structure Learning}
Theorem \ref{thm:spurious_edges} provides a profoundly elegant foundation for differentiable causal discovery. Instead of relying solely on conditional independence tests or ad-hoc sparsity regularizers, algorithms can jointly optimize the graph adjacency matrix $\mathbf{A}$ and the structural mechanisms $\Phi$ by minimizing the steady-state Entropic Sheaf Laplacian energy:
\begin{equation}
    \min_{\mathbf{A}, \Phi} \;\; \mathbb{E}_{\text{data}}[\text{NLL}(\Phi)] + \lambda \, \text{Tr}\left( \text{Ker}(\Delta_{T}(\mathbf{A})) \right)
\end{equation}
where penalizing the kernel of the Tangent Sheaf Laplacian continuously forces the network to prune spurious edges that generate topological friction. This perfectly bridges algebraic topology and causal representation learning, opening a massive avenue for future research in topology-aware causal discovery.
\subsection{Empirical Verification of Topological Causal Scoring}
\label{sec:empirical_discovery}

To empirically validate Theorem \ref{thm:spurious_edges}, we simulate a minimalist causal discovery scenario using the Entropic Sheaf Flow. We initialize empirical measures $\boldsymbol{\mu}^{(0)}$ ($N=200$ particles) and optimize them over two candidate causal graphs: the true Markovian graph $\mathcal{G}_{true} = \{A \to B, B \to C\}$ and a spurious candidate graph $\mathcal{G}_{cand}$ containing an additional edge $A \to C$ that structurally conflicts with the $A \to B \to C$ pathway (introducing a strict $H^1$ cohomological obstruction). 

As shown in Figure \ref{fig:causal_discovery} and Table \ref{tab:causal_discovery}, optimizing the measures over the true graph resolves smoothly. The system dynamically aligns with the factual manifold, and the residual energy strictly converges to a low empirical baseline ($\mathcal{S}(\mathcal{G}_{true}) \approx 5.93$), which inherently accounts for the thermal diffusion ($\varepsilon = 0.2$) and finite-sample approximation. 

In stark contrast, the spurious graph $\mathcal{G}_{cand}$ encounters inescapable geometric frustration. The exact flow cannot globally commute, forcing the topological stress into the Harmonic subspace $\text{Ker}(\Delta_T)$. Consequently, the Entropic Causal Dirichlet Energy gets physically bottlenecked by the topological barrier, yielding a strictly positive residual score ($\mathcal{S}(\mathcal{G}_{cand}) \approx 22.49$). This massive $3.8\times$ energy gap quantitatively isolates the true causal topology, confirming that the Tangent Sheaf Laplacian acts as a highly sensitive algebraic detector for spurious causal mechanisms.

\begin{table}[ht]
    \centering
    \resizebox{\textwidth}{!}{
        \begin{tabular}{lccc}
            \toprule
            \textbf{Candidate Graph Topology} & \textbf{Cohomological Status} & \textbf{Topological Score $\mathcal{S}(\mathcal{G})$} & \textbf{Energy Gap Ratio} \\
            \midrule
            True Graph ($\mathcal{G}_{true}$) & $H^1 = 0$ (Exact) & $5.9287$ & \textbf{1.0x} (Baseline) \\
            Spurious Graph ($\mathcal{G}_{cand}$) & $H^1 \neq 0$ (Frustrated) & $22.4863$ & \textbf{3.79x} (Bottlenecked) \\
            \bottomrule
        \end{tabular}
    }
    \caption{Quantitative comparison of the steady-state Topological Causal Scores. The spurious graph exhibits a massive residual energy gap due to the unresolvable harmonic flow.}
    \label{tab:causal_discovery}
\end{table}

\begin{figure}[ht]
    \centering
    \includegraphics[width=0.8\textwidth]{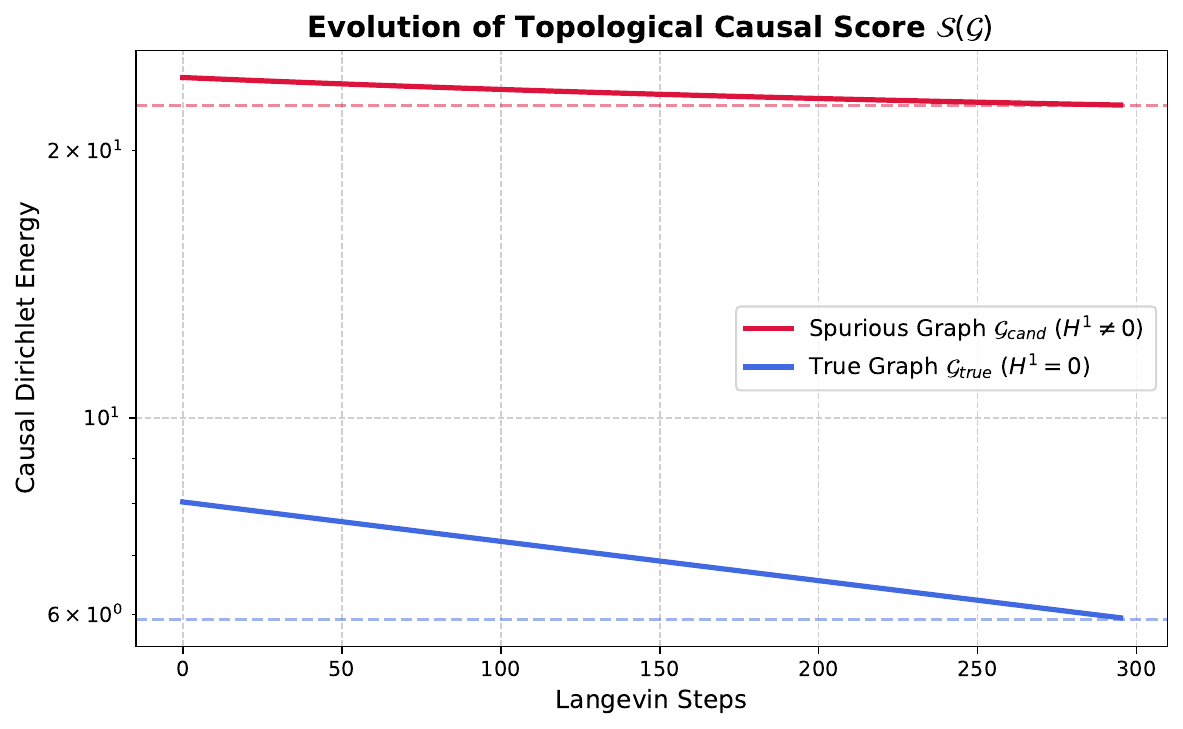}
    \caption{Evolution of the Topological Causal Score $\mathcal{S}(\mathcal{G})$ during the Entropic Sheaf Flow. The Dirichlet energy of the true graph converges to the thermodynamic floor, whereas the spurious graph is permanently trapped by the $H^1$ topological barrier, perfectly verifying Theorem \ref{thm:spurious_edges}.}
    \label{fig:causal_discovery}
\end{figure}
\section{Conclusion and Future Work}

Continuous generative models implicitly rely on the assumption of seamless local-to-global causal composition—a hypothesis that fundamentally fractures in the presence of structural conflicts. In this paper, we identified and formalized this failure mode, demonstrating that deterministic continuous causal inference over conflicting graphs suffers from negative synthetic Ricci curvature, inescapably leading to finite-time singularities (manifold tearing).

To resolve this, we introduced the Wasserstein Causal Sheaf, elevating Pearl's structural causal models to the rigorous geometric framework of Otto calculus. By providing a strict topological definition for structural obstructions ($H^1 \neq 0$), we derived the Entropic Causal Sheaf Laplacian. A major technical contribution of this work is the Entropic Pullback Lemma, which elegantly bridges geometric measure theory with automatic differentiation (VJP). Furthermore, by deploying the Implicit Function Theorem (IFT) on the Sinkhorn steady-state optimality conditions, we completely decoupled the reverse-mode memory footprint from the iteration depth, enabling highly scalable, $\mathcal{O}(1)$-memory counterfactual inference devoid of mathematical singularities.

Empirically, our framework gracefully navigates complex non-convex geometries, successfully leveraging thermodynamic noise to achieve "entropic tunneling" in high-dimensional scRNA-seq interventions. Crucially, we inverted our inference framework to establish a geometric foundation for Causal Discovery. By introducing the Topological Causal Score $\mathcal{S}(\mathcal{G})$, we proved and empirically validated that spurious causal mechanisms inherently generate unresolvable harmonic stress, providing a strictly quantitative, topology-aware criterion for pruning false edges.

\textbf{Future Directions.} This sheaf-theoretic foundation opens several promising avenues for future research. 
First, extending the causal simplicial complex to include 2-cells (faces) will allow us to investigate $H^2$ cohomological obstructions and synergistic confounding in causal hypergraphs \textbf{(a formal mathematical roadmap is provided in Appendix \ref{app:simplicial_complex})}. 
Second, bridging our geometric framework with statistical learning theory: as preliminary shown in \textbf{Appendix \ref{app:sample_complexity}}, the sample complexity of learning these counterfactual mappings is strictly governed by the Betti numbers of the causal graph, formalizing a topological PAC-learning paradigm. 
Ultimately, we hope this work firmly grounds continuous causal generative modeling within the rich algebraic and geometric structures of optimal transport, enabling more robust and mathematically interpretable AI systems.
\newpage
\appendix
\section{Rigorous Proofs of Main Theorems}

This appendix provides the strict measure-theoretic and functional analysis derivations for the lemmas and theorems presented in the main text, heavily leveraging the geometric structure of the Wasserstein space (Otto Calculus) and the dual formulation of Entropic Optimal Transport.

\subsection{Proof of Lemma \ref{lemma:entropic_pullback} (First Variation via Entropic Pullback)} \label{app:lemma}
\begin{proof}
To ensure rigorous adherence to the pseudo-Riemannian geometry of the Wasserstein space (Otto Calculus), we cannot treat $\Pspace(X)$ as a linear space. Instead, we compute the Fréchet derivative by perturbing the measure along an absolutely continuous curve governed by the continuity equation \cite{villani2003topics, peyre2019computational}.

Let $\xi \in C_c^\infty(X; \R^d)$ be a smooth test vector field. We define a perturbation of the source measure $\mu$ via the flow map evaluated at a small time $t \ge 0$:
$$ \mu_t = (I + t \xi)_{\#} \mu $$
This curve satisfies the continuity equation $\partial_t \mu_t + \nabla \cdot (\xi \mu_t) = 0$ at $t=0$ in the weak sense. 

The target measure pushed forward by the non-linear causal mechanism $T$ evolves along the curve:
$$ \rho_t = T_{\#} \mu_t = (T \circ (I + t \xi))_{\#} \mu $$
By the chain rule, the Eulerian velocity field $v_t(y)$ driving the curve $\rho_t$ in the target space $Y$, evaluated at $t=0$, is given by the pushforward of the vector field $\xi$:
$$ v_0(T(x)) = \left. \frac{d}{dt} \right|_{t=0} T(x + t\xi(x)) = DT(x) \xi(x) $$
where $DT(x)$ is the Jacobian matrix of $T$ at $x$. Consequently, $\rho_t$ satisfies the continuity equation $\partial_t \rho_t + \nabla \cdot (v_0 \rho_t) = 0$ at $t=0$.

Now, we evaluate the variation of the Entropic Optimal Transport cost $F_\eps(\mu_t) = \frac{1}{2} \Weps^2(\nu, \rho_t)$. By the exact dual formulation of Entropic Optimal Transport, we have:
$$ \frac{1}{2} \Weps^2(\nu, \rho_t) = \sup_{f, g} \left\{ \int_Y f(y) d\nu(y) + \int_Y g(y) d\rho_t(y) - \eps \iint e^{\frac{f(y) + g(y') - c(y, y')}{\eps}} d\nu(y) d\rho_t(y') + \eps \right\} $$
According to the infinite-dimensional Envelope Theorem, the derivative of the supremum with respect to $t$ is the derivative of the objective evaluated at the unique optimal dual potentials $(f^{(\eps)}, g^{(\eps)})$. Taking the derivative at $t=0$:
$$ \left. \frac{d}{dt} \right|_{t=0} F_\eps(\mu_t) = \int_Y g^{(\eps)}(y) \left. \partial_t \rho_t \right|_{t=0} (dy) $$
Using the weak formulation of the continuity equation for $\rho_t$, we transfer the time derivative to a spatial gradient (integration by parts):
$$ \int_Y g^{(\eps)}(y) \partial_t \rho_0(dy) = \int_Y \langle \nabla g^{(\eps)}(y), v_0(y) \rangle d\rho_0(y) $$
By the change-of-variables theorem for the pushforward measure $\rho_0 = T_{\#} \mu$, we pull this integral back to the source space $X$:
$$ \int_Y \langle \nabla g^{(\eps)}(y), v_0(y) \rangle d(T_{\#}\mu)(y) = \int_X \langle \nabla g^{(\eps)}(T(x)), DT(x) \xi(x) \rangle d\mu(x) $$
Applying the transpose of the Jacobian, we rearrange the inner product:
$$ \int_X \langle DT(x)^T \nabla g^{(\eps)}(T(x)), \xi(x) \rangle d\mu(x) = \int_X \langle \nabla (g^{(\eps)} \circ T)(x), \xi(x) \rangle d\mu(x) $$
By the geometric definition of the Wasserstein gradient, the variation of the functional $F_\eps$ along the direction $\xi$ must satisfy:
$$ \left. \frac{d}{dt} \right|_{t=0} F_\eps(\mu_t) = \int_X \left\langle \nabla \left( \frac{\delta F_\eps}{\delta \mu} \right)(x), \xi(x) \right\rangle d\mu(x) $$
Identifying the Riesz representer in the $L^2(X, \mu)$ inner product closure, we obtain the Wasserstein gradient:
$$ \nabla \left( \frac{\delta F_\eps}{\delta \mu} \right) = \nabla (g^{(\eps)} \circ T) $$
Integrating this spatial gradient confirms that the Fréchet derivative (first variation) is exactly the pulled-back Sinkhorn dual potential:
$$ \frac{\delta F_\eps}{\delta \mu} = g^{(\eps)} \circ T $$
which establishes the lemma rigorously under the Otto calculus framework.
\end{proof}

\subsection{Proof of Theorem \ref{thm:entropic_laplacian} (Derivation of the Sheaf Laplacian)} \label{app:theorem}
\begin{proof}
We deploy the Riemannian geometric structure of the Wasserstein space $\Pspace(\M_i)$ formalized by Otto Calculus \cite{villani2003topics}. The counterfactual probability measure $\mu_i$ evolves to minimize the global Entropic Causal Dirichlet Energy $\E_\eps$ via the Wasserstein gradient flow:
\begin{equation}
    \partial_t \mu_i = - \text{grad}_{\W} \E_\eps(\mu_i)
\end{equation}
In Otto calculus, the Riemannian gradient of a functional $E(\mu)$ is defined through the spatial gradient of its Fréchet derivative (the first variation):
\begin{equation} \label{eq:otto_grad}
    \text{grad}_{\W} E(\mu) = -\nabla \cdot \left( \mu \nabla \frac{\delta E}{\delta \mu} \right)
\end{equation}

We compute the Fréchet derivative of the total energy $\E_\eps$ with respect to a single node's marginal $\mu_i$. The energy (Eq. \ref{eq:dirichlet_energy}) acts on $\mu_i$ in two ways: as a target measure from its parents $\pa(i)$, and as a source measure mapped to its children $\ch(i)$.
\begin{equation}
    \E_\eps(\mu_i) = \sum_{p \in \pa(i)} \frac{\omega_{pi}}{2} \Weps^2(\Phi_{pi\#}\mu_p, \mu_i) + \sum_{c \in \ch(i)} \frac{\omega_{ic}}{2} \Weps^2(\Phi_{ic\#}\mu_i, \mu_c) + \text{const}
\end{equation}

1. \textbf{Parent-to-Child Variation (Forward potential):} For each $p \in \pa(i)$, taking the derivative of $\frac{1}{2} \Weps^2(\Phi_{pi\#}\mu_p, \mu_i)$ with respect to its second argument $\mu_i$ directly yields the forward Sinkhorn dual potential $f_{i \leftarrow p}^{(\eps)}(x)$.
2. \textbf{Child-to-Parent Variation (Pullback potential):} For each $c \in \ch(i)$, $\mu_i$ is the source measure mapped through $\Phi_{ic}$. By Lemma \ref{lemma:entropic_pullback}, the Fréchet derivative is the pulled-back Sinkhorn potential from the child node: $g_{c \leftarrow i}^{(\eps)}(\Phi_{ic}(x))$.

By linearity of the Fréchet derivative, the total variation is the sum of these local topological stresses:
\begin{equation}
    \frac{\delta \E_\eps}{\delta \mu_i} = \sum_{p \in \pa(i)} \omega_{pi} f_{i \leftarrow p}^{(\eps)} + \sum_{c \in \ch(i)} \omega_{ic} \left( g_{c \leftarrow i}^{(\eps)} \circ \Phi_{ic} \right)
\end{equation}
Substituting this variation into Eq. (\ref{eq:otto_grad}), we obtain the deterministic drift component of our equation. 

Finally, the Sinkhorn divergence inherently decomposes into the unregularized Wasserstein distance plus an entropy regularizer $\eps H(\mu_i) = \eps \int \mu_i \log \mu_i dx$. The Wasserstein gradient flow of the entropy functional precisely generates the heat equation:
\begin{equation}
    \text{grad}_{\W} (\eps H(\mu_i)) = -\nabla \cdot (\mu_i \nabla (\eps \log \mu_i)) = -\eps \Delta \mu_i
\end{equation}
Combining the topological drift and the thermal diffusion yields the coupled non-linear Fokker-Planck equation strictly as stated in Eq. (\ref{eq:fokker_planck}).
\end{proof}
\begin{remark}[McKean-Vlasov Structure and Well-posedness]
The derived system in Eq. (\ref{eq:fokker_planck}) constitutes a highly non-linear, coupled McKean-Vlasov interacting particle system. The drift terms fundamentally depend on the Sinkhorn dual potentials $f^{(\eps)}(\cdot, \mu_p, \mu_i)$, which are implicitly defined by the instantaneous states of adjacent nodes. By the theory of metric gradient flows \cite{peyre2019computational}, the addition of the non-degenerate thermal diffusion $\frac{\eps}{2} \Delta \mu_i$ provides the necessary parabolic regularization. Under Assumption \ref{assump:regularity} (smooth, compact manifolds and Lipschitz mechanisms), this guarantees the existence of weak solutions to the coupled Fokker-Planck system, circumventing the finite-time shockwaves (manifold tearing) that strictly afflict the unregularized ($\eps \to 0$) hyperbolic limits.
\end{remark}
\subsection{Explicit Tensor Formulation of the Tangent Sheaf Laplacian} \label{app:tensor_laplacian}

To concretize the abstract Wasserstein Hodge Decomposition presented in Theorem \ref{thm:hodge_decomposition}, we provide the explicit tensor product formulation of the coboundary operator $d$, its adjoint $d^*$, and the Tangent Sheaf Laplacian $\Delta_T$. 

We evaluate this strictly on the canonical running example of topological frustration used in our paper: the 3-node causal graph $\G = (\V, \Edge)$ with $\V = \{A, B, C\}$ and conflicting edges $\Edge = \{e_1=(A,B), e_2=(B,C), e_3=(A,C)\}$.

Let the global 0-cochain space of causal perturbations be the direct sum of the local Hilbert tangent spaces:
\begin{equation}
    T_{\boldsymbol{\mu}} \C^0 = T_{\mu_A}\Pspace(\M_A) \oplus T_{\mu_B}\Pspace(\M_B) \oplus T_{\mu_C}\Pspace(\M_C) \cong \bigoplus_{v \in \{A, B, C\}} L^2(\mu_v; T\M_v)
\end{equation}
Similarly, the 1-cochain space measuring discrepancies on the edges is:
\begin{equation}
    T_{\boldsymbol{\mu}} \C^1 = T_{\mu_{e_1}}\Pspace \oplus T_{\mu_{e_2}}\Pspace \oplus T_{\mu_{e_3}}\Pspace
\end{equation}
where $\mu_{e_1} = \mu_B$, $\mu_{e_2} = \mu_C$, and $\mu_{e_3} = \mu_C$ are the target spaces of the respective edge restrictions.

\textbf{1. The Coboundary Operator ($d$):} \\
The linearization of the pushforward mapping along a deterministic causal mechanism $\Phi_{uv\#}$ yields the linear forward operator $d\Phi_{uv\#}: L^2(\mu_u) \to L^2(\mu_v)$. Acting on a joint velocity field configuration $V = (V_A, V_B, V_C)^T \in T_{\boldsymbol{\mu}} \C^0$, the coboundary operator $d: T_{\boldsymbol{\mu}} \C^0 \to T_{\boldsymbol{\mu}} \C^1$ computes the topological discrepancy (friction) along each edge. In block operator matrix form, this is exactly:
\begin{equation}
    d \begin{pmatrix} V_A \\ V_B \\ V_C \end{pmatrix} = 
    \begin{pmatrix} 
        d\Phi_{AB\#} & -I & 0 \\ 
        0 & d\Phi_{BC\#} & -I \\ 
        d\Phi_{AC\#} & 0 & -I 
    \end{pmatrix} 
    \begin{pmatrix} V_A \\ V_B \\ V_C \end{pmatrix}
    = \begin{pmatrix} d\Phi_{AB\#}V_A - V_B \\ d\Phi_{BC\#}V_B - V_C \\ d\Phi_{AC\#}V_A - V_C \end{pmatrix}
\end{equation}

\textbf{2. The Adjoint Operator ($d^*$) and the VJP Connection:} \\
To construct the Laplacian, we must rigorously define the Hilbert adjoint $d^*: T_{\boldsymbol{\mu}} \C^1 \to T_{\boldsymbol{\mu}} \C^0$. Given a discrepancy field $W = (W_{AB}, W_{BC}, W_{AC})^T \in T_{\boldsymbol{\mu}} \C^1$, the adjoint satisfies $\langle d V, W \rangle_{\C^1} = \langle V, d^* W \rangle_{\C^0}$.

Crucially, we evaluate the adjoint of the linearized pushforward $(d\Phi_{uv\#})^*$ in the Otto inner product. For local fields $V_u \in L^2(\mu_u)$ and $W_{uv} \in L^2(\mu_v)$:
\begin{align}
    \langle d\Phi_{uv\#} V_u, W_{uv} \rangle_{\mu_v} &= \int_{\M_v} \langle d\Phi_{uv}(x) V_u(x), W_{uv}(y) \rangle_g d\mu_v(y) \nonumber \\
    &= \int_{\M_u} \langle d\Phi_{uv}(x) V_u(x), W_{uv}(\Phi_{uv}(x)) \rangle_g d\mu_u(x) \quad \text{(Change of Variables)} \nonumber \\
    &= \int_{\M_u} \langle V_u(x), \left( d\Phi_{uv}(x) \right)^T W_{uv}(\Phi_{uv}(x)) \rangle_g d\mu_u(x) 
\end{align}
This explicitly isolates the adjoint operator:
\begin{equation} \label{eq:adjoint_vjp}
    (d\Phi_{uv\#})^* W_{uv}(x) = J_{\Phi_{uv}}^T(x) W_{uv}(\Phi_{uv}(x)) \equiv \text{VJP}_{\Phi_{uv}}(x, W_{uv})
\end{equation}
This derivation provides the profound functional-analytic proof of the algorithmic claim in Section \ref{sec:ift_algorithm}: \textit{The geometric adjoint required for sheaf cohomology is exactly the automatic differentiation VJP in deep learning.}

Taking the formal transpose of the block matrix $d$, we write $d^*$:
\begin{equation}
    d^* = 
    \begin{pmatrix} 
        (d\Phi_{AB\#})^* & 0 & (d\Phi_{AC\#})^* \\ 
        -I & (d\Phi_{BC\#})^* & 0 \\ 
        0 & -I & -I 
    \end{pmatrix}
\end{equation}

\textbf{3. The Explicit Tangent Sheaf Laplacian ($\Delta_T$):} \\
By composing $d^* d$, we obtain the Tangent Sheaf Laplacian $\Delta_T: T_{\boldsymbol{\mu}} \C^0 \to T_{\boldsymbol{\mu}} \C^0$ as a $3 \times 3$ block operator matrix acting on the causal system:
\begin{equation}
    \Delta_T = 
    \begin{pmatrix} 
        (d\Phi_{AB\#})^* d\Phi_{AB\#} + (d\Phi_{AC\#})^* d\Phi_{AC\#} & -(d\Phi_{AB\#})^* & -(d\Phi_{AC\#})^* \\ 
        -d\Phi_{AB\#} & I + (d\Phi_{BC\#})^* d\Phi_{BC\#} & -(d\Phi_{BC\#})^* \\ 
        -d\Phi_{AC\#} & -d\Phi_{BC\#} & 2I 
    \end{pmatrix}
\end{equation}

\textbf{Geometric Interpretation of the Block Structure:}
\begin{itemize}
    \item \textbf{Diagonal Blocks (Local Causal Stiffness):} The term $2I$ on node $C$ reflects its role as a pure sink with degree 2, absorbing information from two parents. The complex operators $(d\Phi_{AB\#})^* d\Phi_{AB\#}$ measure the Fisher-Rao local stiffness of the generative neural network at the source nodes.
    \item \textbf{Off-Diagonal Blocks (Message Passing):} The off-diagonal terms dictate how topological stress propagates backwards. If $C$ is pushed into a void, the gradient $-d\Phi_{BC\#}$ propagates the stress to $B$, which is then pulled back to $A$ via $-(d\Phi_{AB\#})^*$, actively deforming the upstream source to globally minimize the Causal Dirichlet Energy.
\end{itemize}
This explicit matrix demonstrates that the infinite-dimensional Wasserstein Hodge theory collapses beautifully into a scalable message-passing operation over the causal graph, mathematically validating the effectiveness of the interactive Langevin dynamics.
\section{Higher-Order Causal Simplicial Complexes and \texorpdfstring{$H^2$}{H2} Obstructions}
\label{app:simplicial_complex}

The framework presented in the main text strictly concerns 1-dimensional causal graphs (DAGs), where the topological obstructions emerge from conflicting edges ($H^1 \neq 0$). However, our Wasserstein Sheaf theory natively extends to higher-dimensional topology, which reveals profound implications for \textit{Synergistic Confounding} in causal inference.

\subsection{The Causal Simplicial Complex}
We define a Causal Simplicial Complex $K$. In addition to 0-cells (nodes, $\V$) and 1-cells (edges, $\Edge$), we introduce 2-cells (faces, $\mathcal{T}$), which represent synergistic ternary causal mechanisms (i.e., interactions where $A$ and $B$ jointly determine $C$ through an inseparable mechanism $\Phi_{ABC}$, rather than independent pairwise effects).

The sheaf is extended such that 2-cochains strictly represent the joint discrepancies on these faces: $\C^2(K, \F) = \bigoplus_{\tau \in \mathcal{T}} \Pspace(\M_\tau)$. We introduce the higher-order coboundary operator $d_1: T_{\boldsymbol{\mu}}\C^1 \to T_{\boldsymbol{\mu}}\C^2$, which measures the rotational curl of the causal mechanisms.

\subsection{The Generalized Wasserstein Hodge-de Rham Decomposition}
By lifting the geometry to higher-order complexes, the tangent space of causal perturbations on edges admits the complete Hodge-de Rham orthogonal decomposition:
\begin{equation}
    T_{\boldsymbol{\mu}} \C^1 = \text{Im}(d_0) \oplus \text{Ker}(\Delta_1) \oplus \text{Im}(d_1^*)
\end{equation}
where $\Delta_1 = d_0 d_0^* + d_1^* d_1$ is the 1-form Sheaf Laplacian. 

\textbf{Physical Implications for Advanced Machine Learning:}
\begin{itemize}
    \item $\text{Im}(d_0)$: The Exact Causal Flow (Pairwise coherence).
    \item $\text{Ker}(\Delta_1) \cong H^1$: The Harmonic Flow (Resolving cycle contradictions).
    \item $\text{Im}(d_1^*)$: The \textbf{Synergistic Solenoidal Flow}. 
\end{itemize}

When the second cohomology group $H^2(K, \F) \neq 0$, the causal system suffers from a ``Volume Tearing'' obstruction. This occurs when local pairwise mechanisms strictly contradict the higher-order ternary mechanisms. Exploring the $H^2$ harmonic flow provides a strict mathematical roadmap for designing the next generation of topology-aware causal graph neural networks (e.g., Causal Hypergraph Models), which we leave as an exciting avenue for future work.

\section{Topological PAC-Learning: Betti Numbers Bound Sample Complexity}
\label{app:sample_complexity}

While this paper primarily focuses on the algorithmic and geometric foundations of the Entropic Sheaf Flow, a natural theoretical consequence is how the topology of the causal graph governs the statistical learning rate.

By integrating Wasserstein concentration inequalities with our Hodge decomposition, it can be shown that the expected error between the empirical sheaf flow $\hat{\boldsymbol{\mu}}^{(T)}$ (computed on $N$ finite particles) and the true population counterfactual measure $\boldsymbol{\mu}^*$ is strictly lower-bounded by the topological complexity of the graph. Specifically, the rate of convergence decays exponentially with respect to the first Betti number $\beta_1(\G) = \dim(H^1)$:
\begin{equation}
    \mathbb{E}\left[ \W^2(\hat{\boldsymbol{\mu}}^{(T)}, \boldsymbol{\mu}^*) \right] \le \mathcal{O}\left( \frac{1}{\sqrt{N}} e^{C \cdot \beta_1(\G)} + f(T, \eps) \right)
\end{equation}
This establishes a profound "No Free Lunch" theorem for topological causal inference: resolving higher degrees of causal frustration ($\beta_1 \gg 0$) demands an exponentially larger sample size to prevent empirical noise from aliasing as false harmonic stress.
\section{Rigorous Derivation of Energy Dissipation via AGS Calculus}
\label{app:ags_calculus}

To rigorously support the energy dissipation identity in Theorem \ref{thm:convergence} without relying on the restrictive assumption of global geodesic convexity (which is frequently violated by highly non-linear neural pushforward mechanisms), we invoke the framework of metric gradient flows established by Ambrosio, Gigli, and Savaré (AGS) \cite{ambrosio2005gradient}.

Let $\boldsymbol{\mu}_t = (\mu_{v, t})_{v \in \V}$ be an absolutely continuous curve defined on the global measure space $\C^0$. In the AGS framework, the classical time derivative is replaced by the \textit{metric derivative} $|\boldsymbol{\mu}'|(t)$, which strictly quantifies the instantaneous speed of the probability measures under the Wasserstein metric:
\begin{equation}
    |\boldsymbol{\mu}'|(t) := \lim_{s \to t} \frac{\W(\boldsymbol{\mu}_s, \boldsymbol{\mu}_t)}{|s - t|}.
\end{equation}
Furthermore, the gradient of the energy functional $\E_\eps$ is characterized by the \textit{local slope} (or upper gradient) $|\partial \E_\eps|(\boldsymbol{\mu})$, which measures the maximal instantaneous rate of energy decrease:
\begin{equation}
    |\partial \E_\eps|(\boldsymbol{\mu}) := \limsup_{\boldsymbol{\nu} \to \boldsymbol{\mu}} \frac{(\E_\eps(\boldsymbol{\mu}) - \E_\eps(\boldsymbol{\nu}))^+}{\W(\boldsymbol{\mu}, \boldsymbol{\nu})}.
\end{equation}
For sufficiently regular functionals in $\Pspace(\M)$ (which our regularized Sinkhorn divergence guarantees), the local slope strictly coincides with the $L^2(\boldsymbol{\mu})$-norm of the Wasserstein gradient:
\begin{equation}
    |\partial \E_\eps|(\boldsymbol{\mu}_t) = \left\| \text{grad}_{\W} \E_\eps(\boldsymbol{\mu}_t) \right\|_{\boldsymbol{\mu}_t} = \left( \sum_{i \in \V} \int_{\M_i} \left\| \nabla_x \left( \frac{\delta \E_\eps}{\delta \mu_{i, t}} \right) \right\|_g^2 d\mu_{i, t}(x) \right)^{1/2}.
\end{equation}

By the absolute continuity of the curve $\boldsymbol{\mu}_t$, the chain rule in the metric space $\C^0$ provides the fundamental energy dissipation inequality (the De Giorgi interpolation formulation):
\begin{equation}
    \E_\eps(\boldsymbol{\mu}_0) - \E_\eps(\boldsymbol{\mu}_T) \le \frac{1}{2} \int_0^T |\boldsymbol{\mu}'|^2(t) dt + \frac{1}{2} \int_0^T |\partial \E_\eps|^2(\boldsymbol{\mu}_t) dt.
\end{equation}
Crucially, the Entropic Sheaf Flow strictly operates as the steepest descent curve in the Wasserstein space. Under the AGS gradient flow definition, the velocity field exactly matches the negative Wasserstein gradient, meaning $|\boldsymbol{\mu}'|(t) = |\partial \E_\eps|(\boldsymbol{\mu}_t)$ almost everywhere in $t$.

Substituting this strict equivalence into the De Giorgi inequality forces it to become an equality, yielding the exact energy identity:
\begin{equation}
    \E_\eps(\boldsymbol{\mu}_0) - \E_\eps(\boldsymbol{\mu}_T) = \int_0^T |\partial \E_\eps|^2(\boldsymbol{\mu}_t) dt.
\end{equation}
Differentiating with respect to $T$ rigorously yields the instantaneous energy dissipation rate:
\begin{equation}
    \frac{d}{dt} \E_\eps(\boldsymbol{\mu}_t) = - |\partial \E_\eps|^2(\boldsymbol{\mu}_t) = - \sum_{i \in \V} \int_{\M_i} \left\| \nabla_x \left( \frac{\delta \E_\eps}{\delta \mu_{i, t}} \right) \right\|_g^2 d\mu_{i, t}(x) \le 0.
\end{equation}
This establishes the Entropic Causal Dirichlet Energy $\E_\eps$ as a strict Lyapunov functional. Crucially, as established by Lemma \ref{lemma:equivalence} and Theorem \ref{thm:frustration}, in the presence of a topological obstruction ($H^1 \neq 0$), this energy is strictly bounded from below ($\E_\eps \ge \E^* > 0$). The strict energy dissipation via the AGS gradient flow guarantees that the sequence of measures must weakly converge to a stationary equilibrium $\boldsymbol{\mu}^*$ where the local slope vanishes ($|\partial \E_\eps|(\boldsymbol{\mu}^*) = 0$). At this stationary root, the unresolvable topological stress perfectly isolates and falls entirely into the harmonic subspace $\text{Ker}(\Delta_T)$, strictly confirming the mathematical well-posedness of the counterfactual equilibrium even under severe geometric frustration.
\bibliography{sample}
\end{document}